\newcolumntype{Y}{>{\RaggedRight\arraybackslash}X}
\newcommand{\diag}{\mathop{\text{diag}}}
\newtheoremstyle{algodesc}{}{}{}{}{\bfseries}{.}{ }{}%
\theoremstyle{algodesc}
\definecolor{commentgrey}{gray}{0.45}
\definecolor{backgray}{gray}{0.96}
\footnotesize\color{commentgrey}}
\DeclareMathOperator{\Var}{Var}
\DeclareMathOperator*{\argmin}{arg\,min}
\DeclareMathOperator*{\tr}{tr}
\DeclareMathOperator{\vecto}{vec}
\DeclarePairedDelimiter{\norm}{\lVert}{\rVert}
\newcommand*{\E}{\mathbb E}
\newcommand*{\R}{\mathbb R}
\newcommand*{\bslash}{\mt{\textbackslash}}
\newcommand*{\cond}{\;\ifnum\currentgrouptype=16 \middle\fi|\;}
\newcommand*{\ttilde}{{\raise.17ex\hbox{$\scriptstyle\sim$}}}
\newcommand*{\tensor}[1]{\bm{\mathcal{#1}}}
\newcommand*{\mat}[1]{\mathbf{#1}}
\newsavebox{\mybox}\newsavebox{\mysim}
\newcommand*{\distas}[1]{%
  \savebox{\mybox}{\hbox{\kern3pt$\scriptstyle#1$\kern3pt}}%
  \savebox{\mysim}{\hbox{$\sim$}}%
  \mathbin{\overset{#1}{\kern\z@\resizebox{\wd\mybox}{\ht\mysim}{$\sim$}}}%
}
\def\moverlay{\mathpalette\mov@rlay}
\def\mov@rlay#1#2{\leavevmode\vtop{%
   \baselineskip\z@skip \lineskiplimit-\maxdimen
   \ialign{\hfil$\m@th#1##$\hfil\cr#2\crcr}}}
\newcommand*{\charfusion}[3][\mathord]{
  #1{\ifx#1\mathop\vphantom{#2}\fi\mathpalette\mov@rlay{#2\cr#3}}
  \ifx#1\mathop\expandafter\displaylimits\fi}
\newcommand*{\mt}[1]{\text{\normalfont #1}}
\newtheorem{theorem}{Theorem}[section]
\newtheorem{theorem*}{Theorem}
\newtheorem{corollary*}{Corollary}
\newtheorem{proposition*}{Proposition}
\newtheorem{lemma}{Lemma}[section]
\newtheorem{lemma*}{Lemma}
\theoremstyle{definition}
\newtheorem{definition*}{Definition}
\begin{document}

%

%

\twocolumn[

\aistatstitle{The Sylvester Graphical Lasso (SyGlasso)}

\aistatsauthor{Yu Wang \And Byoungwook Jang \And Alfred Hero}

\aistatsaddress{University of Michigan\\wayneyw@umich.edu \And University of Michigan\\bwjang@umich.edu \And University of Michigan\\hero@umich.edu } ]

\begin{abstract}
This paper introduces \textit{the Sylvester graphical lasso} (SyGlasso) that captures multiway dependencies present in tensor-valued data. The model is based on the Sylvester equation that defines a generative model. The proposed model complements the tensor graphical lasso \citep{greenewald2017tensor} that imposes a Kronecker sum model for the inverse covariance matrix by providing an alternative Kronecker sum model that is generative and interpretable. A nodewise regression approach is adopted for estimating the conditional independence relationships among variables. The statistical convergence of the method is established, and empirical studies are provided to demonstrate the recovery of meaningful conditional dependency graphs. We apply the SyGlasso to an electroencephalography (EEG) study to compare the brain connectivity of alcoholic and nonalcoholic subjects. We demonstrate that our model can simultaneously estimate both the brain connectivity and its temporal dependencies.
\end{abstract}

\section{Introduction}
Estimating conditional independence patterns of multivariate data has long been a topic of interest for statisticians. In the past decade, researchers have focused on imposing sparsity on the precision matrix (inverse covariance matrix) to develop efficient estimators in the high-dimensional statistics regime where $n\ll p$. The success of the $\ell_1$-penalized method for estimating multivariate dependencies was demonstrated in \citet{meinshausenbuhlmann06} and \citet{friedmanhastietib08graphicallasso} for the multivariate setting. This has naturally led researchers to generalize these methods to multiway tensor-valued data. Such generalizations are of benefit for many applications, including the estimation of brain connectivity in neuroscience, reconstruction of molecular networks, and detecting anomalies in social networks over time. 

The first generalizations of multivariate analysis to the tensor-variate settings were presented by \citet{dawid1981some}, where the matrix-variate (a.k.a. two-dimensional tensor) distribution was first introduced to model the dependency structures among both rows and columns. \citet{dawid1981some} extended the multivariate setting by rewriting the tensor-variate data as a vectorized (vec) representation of the tensor samples $\tensor{X} \in \R^{m_1 \times \cdots \times m_k}$ and analyzing the overall precision matrix $\mat{\Omega} = \E \big ( \mt{vec}(\tensor{X}) \mt{vec}(\tensor{X})^T \big ) \in \R^{m \times m}$, where $m = \prod_{k=1}^K m_k$. Even for a two-dimensional tensor $\tensor{X} \in \R^{m_1 \times m_2}$, the computation complexity and sample complexity is high since the number of parameters in the precision matrix grows quadratically as $m^2$. Therefore, in the regime of tensor-variate data, unstructured precision matrix estimation has posed challenges due to the large number of samples needed for accurate structure recovery. 

To address the sample complexity challenges, sparsity can be imposed on the precision matrix $\mat{\Omega}$ by using a sparse Kronecker product (KP) or Kronecker sum (KS) decompositions of $\mat{\Omega}$. The earliest and most popular form of sparse structured precision matrix estimation represents $\mat{\Omega}$ as the Kronecker product of smaller precision matrices. \citet{tsiligkaridis2013convergence} and \citet{zhou14} proposed to model the precision matrix as a sparse Kronecker product of the covariance matrices along each mode of the tensor in the form $\bm \Omega =  \bm{\Psi}_1 \otimes \cdots \otimes \bm \Psi_K$. The KP structure on the precision matrix has the nice property that the corresponding covariance matrix is also a KP. \citet{zhou14} provides a theoretical framework for estimating the $\mat{\Omega}$ under KP structure and showed that the precision matrices can be estimated from a single instance under the matrix-variate normal distribution. \citet{lyu2019tensor} extended the KP structured model to tensor-valued data, and provided new theoretical insights into the KP model. An alternative, called the Bigraphical Lasso, was proposed by \citet{kalaitzis2013bigraphical} to model conditional dependency structures of precision matrices by using a Kronecker sum representation $\bm \Omega = \bm \Psi_1 \oplus \bm \Psi_2 = (\bm \Psi_1 \otimes \mat I) + (\mat I \otimes \bm \Psi_2)$. On the other hand, \citet{rudelsonzhou17errinvardependent} and \cite{parketal17_kroneckersum} studied the KS structure on the covariance matrix $\bm \Sigma = \mat A \oplus \mat B$ which corresponds to errors-in-variables models. More recently, \citet{greenewald2017tensor} proposed a model that generalized the KS structure to model tensor-valued data, called the TeraLasso. As shown in their paper, compared to the KP structure, KS structure on the precision matrix leads to a non-separable covariance matrix that provides a richer model than the KP structure.

\textbf{KP vs KS:} The KP model admits a simple stochastic representation as $\mat{X}=\mat{C}^{-1}\mat{Z}\mat{D}^{-1}$, where $\mat{A}=\mat{C}\mat{C}^T, \mat{B}=\mat{D}\mat{D}^T$, and $\mat{Z}$ is white Gaussian. It can be shown using properties of KP that $X \sim \mathcal{N}(0,(\mat{A} \otimes \mat{B})^{-1})$.  Unlike the KP model, the KS model does not have a simple stochastic representation. From another perspective, the Kronecker structures can be characterized by the product graphs of the individual components. \citet{kalaitzis2013bigraphical} first motivated the KS structure on the precision matrix by relating $(\mat{\Psi}_1 \oplus \cdots \oplus \mat{\Psi}_K)$ to the associated Cartesian product graph. Thus, the overall structure of $\mat{\Omega}$ naturally leads to an interpretable model that brings the individual components together. The KP, however, corresponds to the direct tensor product of the individual graphs and leads to a denser dependency structure in the precision matrix \citet{greenewald2017tensor}. 

\textbf{The Sylvester Graphical Lasso (SyGlasso):} We propose a \textit{Sylvester-structured graphical model} to estimate precision matrices associated with tensor data. Similar to the KP- and KS-structured graphical models, we simultaneously learn $K$ graphs along each mode of the tensor data. However, instead of a KS or KP model for the precision matrix, the Sylvester structured graphical model uses a KS model for the square root factor of the precision matrix. The model is estimated by joint sparse regression models that impose sparsity on the individual components $\bm\Psi_k$ for $k=1, \dots, K$. The Sylvester model reduces to a squared KS representation for the precision matrix $\mat{\Omega} = (\mat{\Psi}_1 \oplus \cdots \oplus \mat{\Psi}_K)^2$, which is motivated by a stochastic representation of multivariate data with such a precision matrix. SyGlasso is the first KS-based graphical lasso model that admits a stochastic representation (i.e., Sylvester). Thus, our proposed SyGlasso puts the KS representations on similar ground as the KP representations in terms of interpretablility.

\subsection*{Notations}
We adopt the notations used by \citet{kolda2009tensor}. A $K$-th order tensor is denoted by boldface Euler script letters, e.g, $\tensor{X} \in \R^{m_1 \times \dots \times m_K}$. $\tensor{X}$ reduces to a vector for $K=1$ and to a matrix for $K=2$. The $(i_1,\dots, i_K)$-th element of $\tensor{X}$ is denoted by $\tensor{X}_{i_1,\dots, i_K}$, and we define the vectorization of $\tensor{X}$ to be $\vecto(\tensor{X}) := (\tensor{X}_{1,1,\dots,1},\tensor{X}_{2,1,\dots,1},\dots,\tensor{X}_{m_1,1,\dots,1},\tensor{X}_{1,2,\dots,1},$ $\dots,\tensor{X}_{m_1,m_2,\dots,m_k})^T \in \R^m$ with $m=\prod_{k=1}^K m_k$.

There are several tensor algebra concepts that we recall. A fiber is the higher order analogue of the row and column of matrices. It is obtained by fixing all but one of the indices of the tensor, e.g., the mode-$k$ fiber of $\tensor{X}$ is $\tensor{X}_{i_1,\dots,i_{k-1},:,i_{k+1},\dots,i_K}$. Matricization, also known as unfolding, is the process of transforming a tensor into a matrix. The mode-$k$ matricization of a tensor $\tensor{X}$, denoted by $\tensor{X}_{(k)}$, arranges the mode-$k$ fibers to be the columns of the resulting matrix. It is possible to multiply a tensor by a matrix -- the $k$-mode product of a tensor $\tensor{X} \in \R^{m_1 \times \dots \times m_K}$ and a matrix $\mat{A} \in \R^{J \times m_k}$, denoted as $\tensor{X} \times_k \mat{A}$, is of size $m_1 \times \dots \times m_{k-1} \times J \times m_{k+1} \times \dots m_k$. Its entry is defined as $(\tensor{X} \times_k \mat{A})_{i_1,\dots,i_{k-1},j,i_{k+1},\dots,i_K} := \sum_{i_k=1}^{m_k} \tensor{X}_{i_1,\dots,i_K} A_{j,i_k}$. In addition, for a list of matrices $\{\mat{A}_1,\dots,\mat{A}_K\}$ with $\mat{A}_k \in \R^{m_k \times m_k}$, $k=1,\dots,K$, we define $\tensor{X} \times \{\mat{A}_1,\dots,\mat{A}_K\} := \tensor{X} \times_1 \mat{A}_1 \times_2 \dots \times_K \mat{A}_K$. Lastly, we define the $K$-way Kronecker product as $\bigotimes_{k=1}^K \bm\Psi_k = \bm\Psi_1 \otimes \cdots \otimes \bm\Psi_K$, and the equivalent notation for the Kronecker sum as $\bigoplus_{k=1}^K \bm\Psi_k = \bm\Psi_1 \oplus \dots \oplus \bm\Psi_K = \sum_{k=1}^K \mat I_{[d_{1:k-1}]} \otimes \bm\Psi_k \otimes \mat I_{[d_{k+1:K}]}$, where $\mat I_{[d_{k:\ell}]} = \mat I_{d_k} \otimes \dots \otimes \mat I_{d_\ell}$.

\section{Sylvester Graphical Lasso}
Let a random tensor $\tensor{X} \in \R^{m_1 \times \dots \times m_K}$ be generated by the following representation:
\begin{equation}\label{eqn:tensor_sylvester}
    \tensor{X} \times_1 \mat{\Psi}_1 + \cdots + \tensor{X} \times_K \mat{\Psi}_K = \tensor{T},
\end{equation} 
where $\mat{\Psi}_k \in \R^{m_k \times m_k}, k=1,\dots,K$ are sparse symmetric positive definite matrices and $\tensor{T}$ is a random tensor of the same order as $\tensor{X}$. Equation \eqref{eqn:tensor_sylvester} is known as the Sylvester tensor equation. The equation often arises in finite difference discretization of linear partial equations in high dimension \citep{bai2003hermitian} and discretization of separable PDEs \citep{kressner2010krylov,grasedyck2004existence}. When $K=2$ it reduces to the Sylvester matrix equation $\mat{\Psi_1} \mat{X} + \mat{X} \mat{\Psi_2}^T = \mat{T}$ which has wide application in control theory, signal processing and system identification (see, for example \citet{datta2017cocolasso} and references therein).

It is not difficult to verify that the Sylvester representation \eqref{eqn:tensor_sylvester} is equivalent to the following system of linear equations:
\begin{equation}\label{eqn:linear_tensor_sylvester}
  \left( \bigoplus_{k=1}^K \bm\Psi_k \right ) \vecto(\tensor{X}) = \vecto(\tensor{T}),
\end{equation}
If $\tensor{T}$ is a random tensor such that $\vecto(\tensor{T})$ has zero mean and identity covariance, it follows from \eqref{eqn:linear_tensor_sylvester} that any $\tensor{X}$ generated from the stochastic relation \eqref{eqn:tensor_sylvester} satisfies $\E \vecto(\tensor{X}) = \mat{0}$ and $\mat{\Sigma} = \mat{\Omega}^{-1} := \E \vecto(\tensor{X}) \vecto(\tensor{X})^T = \left ( \bigoplus_{k=1}^K \mat{\Psi}_k \right)^{-2}$. In particular, when $\vecto(\tensor{T}) \sim \mathcal{N}(\mat{0},\mat{I}_m)$, we have that $\vecto(\tensor{X}) \sim \mathcal{N}\left (\mat{0}, \left ( \bigoplus_{k=1}^K \mat{\Psi}_k \right)^{-2} \right)$.

This paper proposes a procedure for estimating $\mat{\Omega}$ with 
$N$ independent copies of the tensor data $\{\tensor{X}^i\}_{i=1}^N$ that are generated from \eqref{eqn:tensor_sylvester}. For the rest of the paper, we assume that the last mode of the data tensor corresponds to the observations mode. For example, when $K=2$, $\tensor{X} \in \R^{m_1 \times m_2 \times N}$ is the matrix-variate data with $N$ observations. Our goal is to estimate the $K$ precision matrices $\{ \mat{\Psi_k} \}_{k=1}^K$ each of which describes the conditional independence of $k$-th data dimension. The resulting precision matrix is $\mat{\Omega} = \left ( \bigoplus_{k=1}^K \mat{\Psi}_k \right)^2$. By rewriting \eqref{eqn:linear_tensor_sylvester} element-wise, we first observe that
\begin{equation}
\label{eqn:elementwise_tensor_sylvester}
\begin{aligned}
    & \left( \sum_{k=1}^K (\mat{\Psi}_k)_{i_k,i_k} \right) \tensor{X}_{i_{[1:K]}} \\
    & = -\sum_{k=1}^K \sum_{j_k \neq i_k} (\mat{\Psi}_k)_{i_k,j_k} \tensor{X}_{i_{[1:k]},j_k,i_{[k+1:K]}} + \tensor{T}_{i_{[1:K]}}.
\end{aligned}
\end{equation} 
Note that the left-hand side of \eqref{eqn:elementwise_tensor_sylvester} involves only the summation of the diagonals of the $\mat{\Psi}$'s and the right-hand side is composed of columns of $\bm\Psi$'s that exclude the diagonal terms. Equation \eqref{eqn:elementwise_tensor_sylvester} can be interpreted as an autogregressive model relating the $(i_1,\dots,i_K)$-th element of the data tensor (scaled by the sum of diagonals) to other elements in the fibers of the data tensor. The columns of $\mat{\Psi}'s$ act as regression coefficients. The formulation in \eqref{eqn:elementwise_tensor_sylvester} naturally leads us to consider a pseudolikelihood-based estimation procedure \citep{besag1977efficiency} for estimating $\bm\Omega$. It is known that inference using pseudo-likelihood is consistent and enjoys the same $\sqrt{N}$ convergence rate as the MLE in general \citep{varin2011overview}. This procedure can also be more robust to model misspecification. Specifically, we define the sparse estimate of the underlying precision matrices along each axis of the data as the solution of the following convex optimization problem:
\begin{equation}
\label{eqn:objective}
  \begin{aligned}
    & \min_{\substack{\mat{\Psi}_k \in \R^{m_k \times m_k}\\k=1,\dots K}} -N \sum_{i_1,\dots,i_K} \log \tensor{W}_{i_{[1:K]}} \\ 
    & \qquad + \frac{1}{2} \sum_{i_1,\dots,i_K} \norm{(I) + (II)}_2^2 + \sum_{k=1}^K P_{\lambda_k}(\mat{\Psi}_k).
  \end{aligned} \
\end{equation}
where $P_{\lambda_k}(\cdot)$ is a penalty function indexed by the tuning parameter $\lambda_k$ and 
\begin{align*}
  (I) & = \tensor{W}_{i_{[1:K]}}\tensor{X}_{i_{[1:K]}} \\
  (II) & = \sum_{k=1}^K \sum_{j_k \neq i_k} (\mat{\Psi}_k)_{i_k,j_k} \tensor{X}_{i_{[1:k]},j_k,i_{[k+1:K]}},
\end{align*}
with $\tensor{W}_{i_{[1:K]}} := \sum_{k=1}^K (\mat{\Psi}_k)_{i_k,i_k}$. Here we focus on the $\ell_1$-norm penalty, i.e., $P_{\lambda_k}(\mat{\Psi}_k) = \lambda_k \norm{\mat{\Psi}_k}_{1,\text{off}}$.

The optimization problem \eqref{eqn:objective} can be put into the following matrix form:
\begin{equation}\label{eqn:objective_matrix}
    \begin{aligned}
    \min_{\substack{\mat{\Psi}_k \in \R^{m_k \times m_k}\\ k=1,\dots K}} 
    & -\frac{N}{2} \log|(\text{diag}(\mat{\Psi}_1) \oplus \dots \oplus \text{diag}(\mat{\Psi}_K))^2| \\ \nonumber
    + & \frac{N}{2} \tr(\mat{S}(\mat{\Psi}_1 \oplus \dots \oplus \mat{\Psi}_K)^2) + \sum_{k=1}^K P_{\lambda_k}(\mat{\Psi}_k) \nonumber
    \end{aligned}
\end{equation}
where $\diag(\mat{\Psi}_k) \in \R^{m_k \times m_k}$ is a matrix of the diagonal entries of $\mat{\Psi}_k$ and $\mat{S} \in \R^{m \times m}$ is the sample covariance matrix, i.e., $\mat{S}=\frac{1}{N} \vecto(\tensor{X})^T \vecto(\tensor{X})$. Note that the pseudolikelihood above approximates the $\ell_1$-penalized Gaussian negative log-likelihood in the log-determinant term by including only the Kronecker sum of the diagonal matrices instead of the Kronecker sum of the full matrices. Further discussion of pseudolikelihood- and likelihood-based approaches for (inverse) covariance estimations can be found in \citet{khare2015convex}.

We also note that when $K=1$ the objective \eqref{eqn:objective} reduces to the objective of the CONCORD estimator \citep{khare2015convex}, and is similar to those of SPACE \citep{peng2009partial} and Symmetric lasso \citep{friedman2010applications}. Our framework is a generalization of these methods to higher order tensor-valued data, when the Sylvester representation \eqref{eqn:tensor_sylvester} holds.

\vspace{\baselineskip}

\noindent\textbf{Remark:} In our formulation $\mat{\Omega}=(\bigoplus_{k=1}^K \mat \Psi_k)^2$ does not uniquely determine $\{\mat{\Psi}_k\}_{k=1}^K$ due to the trace ambiguity: scaled identity factors can be added to/subtracted from the $\mat{\Psi}_k's$ without changing the matrix $\bm\Omega$. To address this non-identifiability, we rewrite the overall precision matrix $\mat{\Omega}$ as
\begin{equation*}
\begin{aligned}
  \mat{\Omega}  = \left( \bigoplus_{k=1}^K \mat{\Psi}_k \right )^2
   = \left( \bigoplus_{k=1}^K \mat{\Psi}_k^{\mt{off}} + \bigoplus_{k=1}^K \text{diag}(\mat{\Psi}_k) \right )^2,
\end{aligned}
\end{equation*} where $\mat{\Psi}_k^{\text{off}}=\mat{\Psi}_k-\text{diag}(\mat{\Psi}_k)$, and estimate the off-diagonal entries $\mat{\Psi}_k^{\text{off}}$ and $\bigoplus_{k=1}^K \text{diag}(\mat{\Psi}_k)$ separately. This allows us to reconstruct the overall precision matrix $\mat{\Omega}$ when $\bm\Psi_k^{\text{off}}$ is penalized with an $\ell_1$ penalty.

\subsection{Estimation of the graphical model}
Let $Q_N(\tensor{W},\{\mat{\Psi}_k^{\text{off}}\}_{k=1}^K)$ denote the objective function in \eqref{eqn:objective}, where $\tensor{W}=\bigoplus_{k=1}^K \text{diag}(\mat{\Psi}_k)$. We adopt a convergent alternating minimization approach \citep{khare2014convergence} that cycles between optimizing $\mat{\Psi}_k$ and $\tensor{W}$ while fixing other parameters.
In particular, for $1 \leq k \leq K$, $1 \leq i_k < j_k \leq m_k$, define
\begin{equation}
\label{eq:sylvester_tensor_update}
\begin{aligned}
    T_{i_kj_k}(\mat{\Psi}_k^{\text{off}}) & = \argmin_{\substack{(\Tilde{\mat{\Psi}}_l)_{m,n}=(\mat{\Psi}_l)_{m,n} \\ \forall (l,m,n) \neq (k,i_k,j_k)}} 
    Q_N(\Tilde{\tensor{W}},\{\Tilde{\mat{\Psi}}_k^{\text{off}}\}_{k=1}^K)\\
    T(\tensor{W}) & = \quad \; \argmin_{\substack{\Tilde{\mat{\Psi}}_k^{\text{off}}=\mat{\Psi}_k^{\text{off}} \\ \forall k}} 
    \quad \; \; Q_N(\Tilde{\tensor{W}},\{\Tilde{\mat{\Psi}}_k^{\text{off}}\}_{k=1}^K).
\end{aligned}
\end{equation}
For each $(k,i_k,j_k)$, $T_{i_kj_k}(\mat{\Psi}_k^{\text{off}})$ 
updates the $(i_k,j_k)$-th entry with the minimizer of $Q_N(\tensor{W},\{\mat{\Psi}\}_{k=1}^K)$ with respect to $(\mat{\Psi}_k)_{i_kj_k}^{\text{off}}$ holding all other variables constant. Similarly, $T(\tensor{W})$ updates $\tensor{W}_{i_{[1:K]}}$ with the solution of $\min Q_N(\tensor{W},\{\mat{\Psi}\}_{k=1}^K)$ with respect to $\tensor{W}_{i_{[1:K]}}$ holding all other variables constant. The closed form updates $T_{i_kj_k}(\mat{\Psi}_k^{\text{off}})$ and $T(\tensor{W})$ are detailed in Appendix \ref{sec:sylvester_teralasso_derivation}.

\begin{algorithm}
\caption{Nodewise SyGlasso}
\label{alg:nodewise_tensor_lasso}
  \SetAlgoLined
  \KwIn{Standardized data $\tensor{X}$, penalty parameter $\lambda_k$}
  \KwOut{$\{\hat{\mat{\Psi}}_k\}_{k=1}^K$, $\hat{\mat{\Omega}}=\left( \bigoplus_{k=1}^K \hat{ \mat{\Psi}}_k \right )^2$}
  Initialize $\{\hat{\mat{\Psi}}_k^{(0)}\}_{k=1}^K$, $\hat{\mat{\Omega}}^{(0)}=\left( \bigoplus_{k=1}^K \hat{ \mat{\Psi}}_k^{(0)} \right )^2$ \\
  \While{not converged}{
  \texttt{\#} \textit{Update off-diagonal elements}\;
    \For{$k \leftarrow 1,\dots,K$}{
      \For{$i_k \leftarrow 1,\dots,m_k-1$}{
        \For{$j_k \leftarrow i_k+1,\dots,m_k$}{
          $(\hat{\mat{\Psi}}_k^{\text{(t+1)}})_{i_k,j_k} \leftarrow (T_{i_k,j_k}(\mat{\Psi}_k^{\text{(t)}}))_{i_k,j_k}$\;
          \hfill from \eqref{eqn:update_offdiag} in Appendix \ref{subsec:derivation_offdiag}
        }
      }
    } 
    \texttt{\#} \textit{Update diagonal elements}\;
    $\hat{\tensor{W}}^{\text{(t+1)}} \leftarrow T(\tensor{W}^{\text{(t)}})$ from \eqref{eqn:update_diag} in Appendix \ref{subsec:derivation_diag}
  } 
\end{algorithm}

\section{Large Sample Properties}\label{thm}
We show that under suitable conditions, the Sylvester graphical lasso (SyGlasso) estimator (Algorithm \ref{alg:nodewise_tensor_lasso}) achieves both model selection consistency and estimation consistency. As in other studies \citep{khare2015convex, peng2009partial}\footnote{When $K=1$ it is possible to relax this assumption to require only accurate estimates of the diagonals, see \citet{khare2015convex, peng2009partial} for details.}, for the convergence analysis we make standard assumptions that the diagonal of $\mat{\Omega}$ is known. We analyze the theoretical properties of the SyGlasso under the assumption that $\tensor{W}$ is given. In practice, we can estimate $\tensor{W}$ using Algorithm \ref{alg:nodewise_tensor_lasso}, and if the diagonals of each individual $\mat{\Psi}_k$ are desired, we can incorporate any available prior knowledge of the variation along each data dimension.

We estimate $\{\mat{\Psi}_k^{\text{off}}\}_{k=1}^K$ by solving the following $\ell_1$ penalized problem:
\begin{equation}
    \min_{\bm{\beta}} L_N \Big(\tensor{W},\bm{\beta},\tensor{X}\Big) + \sum_{k=1}^K \lambda_k \norm{\mat{\Psi}_k}_{1,\text{off}},
\end{equation} 
where $L_N \Big(\tensor{W},\bm{\beta},\tensor{X}\Big):=\frac{1}{N}\sum_{s=1}^{N} L\Big(\tensor{W},\bm{\beta},\tensor{X}^s \Big)$, with
\begin{equation}
\begin{aligned}
    L\Big(\tensor{W},\bm{\beta},\tensor{X}^s \Big) & = - N \sum_{i_{[1:K]}} \log \tensor{W}_{i_{[1:K]}} \\ 
    & \qquad \qquad + \frac{1}{2} \sum_{i_1,\dots,i_K} ((I) + (II))^2. \\ 
\end{aligned}
\vspace{-10pt}
\end{equation} 
where
\begin{align*}
  & (I)  = \tensor{W}_{i_{[1:K]}}\tensor{X}_{i_{[1:K]}} \\
  & (II)  = \sum_{k=1}^K \sum_{j_k \neq i_k} (\mat{\Psi}_k)_{i_k,j_k} \tensor{X}_{i_{[1:k-1]},j_k,i_{[k+1:K]}}\\
  & \bm{\beta}  = ((\mat{\Psi}_1)_{1,2},(\mat{\Psi}_1)_{1,3},\dots,(\mat{\Psi}_1)_{1,m_1},\dots,(\mat{\Psi}_k)_{m_k-1,m_k})^T
\end{align*} and $\bm \beta$ denotes the off-diagonal entries of all $\mat{\Psi}_k's$. 

We first state the regularity conditions needed for establishing convergence of the SyGlasso estimator. Let $\mathcal{A}_{k}:=\{(i,j):(\mat{\Psi}_k)_{i,j} \neq 0, i \neq j\}$ and $q_{k}:=|\mathcal{A}_{k}|$ for $k=1,\dots,K$ be the true edge set and the number of edges, respectively. Let $\mathcal{A} = \cup_{k=1}^K \mathcal{A}_{k}$. We use $\bar{\bm\beta}, \bar{\bm\Omega}, \bar{\tensor{W}}$ to emphasize that they are the true values of the corresponding parameters.

\noindent \textbf{(A1 - Subgaussianity)} The data $\tensor{X}^1,\dots,\tensor{X}^N$ are i.i.d subgaussian random tensors, that is, $\vecto(\tensor{X}^i) \sim \mat{x}$, where $\mat{x}$ is a subgaussian random vector in $\mathbb{R}^p$, i.e., there exist a constant $c>0$, such that for every $\mat{a} \in \mathbb{R}^p$, $\mathbb{E}e^{\mat{a}^T x} \leq e^{c\mat{a}^T \bar{\mat{\Sigma}} \mat{a}}$, and there exist $\rho_j > 0$ such that $\mathbb{E}e^{tx_j^2} \leq K$ whenever $|t| < \rho_j$, for $1 \leq j \leq p$.

\noindent \textbf{(A2 - Bounded eigenvalues)} There exist constants $0 < \Lambda_{\min} \leq \Lambda_{\max} < \infty$, such that the minimum and maximum eigenvalues of $\mat{\Omega}$ are bounded with $\lambda_{\min}(\bar{\mat{\Omega}}) = (\sum_{k=1}^K \lambda_{\max}(\mat{\Psi}_k))^{-2} \geq \Lambda_{\min}$ and $\lambda_{\max}(\bar{\mat{\Omega}}) = (\sum_{k=1}^K \lambda_{\min}(\mat{\Psi}_k))^{-2} \leq \Lambda_{\max}$.

\noindent \textbf{(A3 - Incoherence condition)} There exists a constant $\delta < 1$ such that for $k=1,\dots,K$ and all $(i,j) \in \mathcal{A}_{k}$
\begin{equation*}
    |\bar{L}_{ij,\mathcal{A}_{k}}^{''}(\bar{\tensor{W}},\bar{\bm{\beta}})[\bar{L}_{\mathcal{A}_{k},\mathcal{A}_{k}}^{''}(\bar{\tensor{W}},\bar{\bm{\beta}})]^{-1} \text{sign}(\bar{\bm{\beta}}_{\mathcal{A}_{k}})| \leq \delta,
\end{equation*} where for each $k$ and $1 \leq i < j \leq m_k$, $1 \leq k < l \leq m_k$,
\begin{equation*}
    \bar{L}_{ij,kl}^{''}(\bar{\tensor{W}},\bar{\bm{\beta}}) := E_{\bar{\tensor{W}},\bar{\bm{\beta}}} \Bigg(\frac{\partial^2 L(\tensor{W},\bm{\beta},\tensor{X})}{\partial(\mat{\Psi}_k)_{i,j} \partial(\mat{\Psi}_k)_{k,l}}|_{\tensor{W}=\bar{\tensor{W}},\bm{\beta}=\bar{\bm{\beta}}} \Bigg).
\end{equation*}

Note that conditions analogous to (A3) have been used in \citet{meinshausenbuhlmann06} and \citet{peng2009partial} to establish high-dimensional model selection consistency of the nodewise graphical lasso in the case of $K=1$. \citet{zhao2006model} show that such a condition is almost necessary and sufficient for model selection consistency in lasso regression, and they provide some examples when this condition is satisfied.

Inspired by \citet{meinshausenbuhlmann06} and \citet{peng2009partial} we prove the following properties:
\begin{enumerate} 
    \vspace{-8pt}
    \item Theorem 3.1 establishes estimation consistency and sign consistency for the nodewise SyGlasso restricted to the true support, i.e., $\bm{\beta}_{\mathcal{A}^c}=0$,
    \item Theorem 3.2 shows that no wrong edge is selected with probability tending to one,
    \item Theorem 3.3 establishes consistency result of the nodewise SyGlasso.
\end{enumerate}

\begin{theorem}
  Suppose that conditions (A1-A2) are satisfied. Suppose further that $\lambda_{N,k}=O(\sqrt{\frac{m_k\log p}{N}})$ for all $k$ and $N > O(\max_k q_k m_k \log p)$ as $N \rightarrow \infty$. Then there exists a constant $C(\bar{\bm{\beta}})$, such that for any $\eta>0$, the following hold with probability at least $1-O(\exp(-\eta \log p))$:
  \begin{itemize}
      \item There exists a global minimizer $\hat{\bm{\beta}}_{\mathcal{A}}$ of the restricted SyGlasso problem:
      \begin{equation}\label{eqn:restricted_problem}
          \min_{\bm{\beta}:\bm{\beta}_{\mathcal{A}^c}=0} L_N(\bar{\tensor{W}},\bm{\beta},\tensor{X}) + \sum_{k=1}^K \lambda_k \norm{\mat{\Psi}_k}_{1,\text{off}}.
      \end{equation}
      \item (Estimation consistency) Any solution $\hat{\bm{\beta}}_{\mathcal{A}}$ of \eqref{eqn:restricted_problem} satisfies:
      \begin{equation*}
          \|\hat{\bm{\beta}}_{\mathcal{A}} - \bm{\beta}_{\mathcal{A}}\|_2 \leq C(\bar{\bm{\beta}})\sqrt{K}\max_{k}\sqrt{q_{k}}\lambda_{N,k}.
      \end{equation*}
      \item (Sign consistency) If further the minimal signal strength satisfies: $\min_{(i,j) \in \mathcal{A}_{k}}|(\mat{\Psi}_k)_{i,j}| \geq 2C(\bar{\bm{\beta}})\sqrt{K}\max_{k}\sqrt{q_{k}}\lambda_{N,k}$ for each $k$, then sign($\hat{\bm{\beta}}_{\mathcal{A}_{k}}$)=sign($\bar{\bm{\beta}}_{\mathcal{A}_{k}}$). 
  \end{itemize}
  
\end{theorem}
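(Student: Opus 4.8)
The plan is to exploit the fact that, once $\tensor{W}=\bar{\tensor{W}}$ is held fixed, the loss $L_N(\bar{\tensor{W}},\bm{\beta},\tensor{X})$ is a \emph{quadratic} function of $\bm{\beta}$: the log-determinant term no longer depends on $\bm{\beta}$, and $\frac{1}{2}\sum_i((I)+(II))^2$ is a quadratic form $\frac12\bm{\beta}^T\hat{\mat{H}}\bm{\beta}-\hat{\mat{b}}^T\bm{\beta}+\text{const}$, where $\hat{\mat{H}}$ is a sample Gram matrix built from the mode-$k$ fibers of $\tensor{X}$ with population counterpart $\bar{\mat{H}}$ whose spectrum is governed by (A2). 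The restricted problem \eqref{eqn:restricted_problem} is thus a convex $\ell_1$-penalized quadratic program, and I would establish existence of a minimizer near $\bar{\bm{\beta}}_{\mathcal{A}}$ by the standard boundary-positivity argument. Writing $F(\bm{\beta}):=L_N(\bar{\tensor{W}},\bm{\beta},\tensor{X})+\sum_k\lambda_{N,k}\norm{\mat{\Psi}_k}_{1,\text{off}}$ for the penalized objective and $G(\bm{u}):=F(\bar{\bm{\beta}}_{\mathcal{A}}+\bm{u})-F(\bar{\bm{\beta}}_{\mathcal{A}})$ for its centered restriction to $\mathcal{A}$, I would show $G(\bm{u})>0$ for every $\bm{u}$ supported on $\mathcal{A}$ with $\norm{\bm{u}}_2=r$, where $r=C(\bar{\bm{\beta}})\sqrt{K}\max_k\sqrt{q_k}\lambda_{N,k}$. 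Convexity then forces a global minimizer inside the ball $\{\norm{\bm{u}}_2\le r\}$, giving both the existence claim and the estimation-consistency bound at once.

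Expanding $G$ produces three pieces. The linear piece is bounded by $|\nabla L_N(\bar{\bm{\beta}})^T\bm{u}|\le\norm{\nabla L_N(\bar{\bm{\beta}})}_\infty\norm{\bm{u}}_1$; the penalty difference is bounded below by $-\sum_k\lambda_{N,k}\norm{\bm{u}_{\mathcal{A}_k}}_1$; and the quadratic piece equals $\frac12\bm{u}^T\hat{\mat{H}}_{\mathcal{A}\mathcal{A}}\bm{u}$. For the first two I would convert $\ell_1$ to $\ell_2$ blockwise via $\norm{\bm{u}_{\mathcal{A}_k}}_1\le\sqrt{q_k}\norm{\bm{u}_{\mathcal{A}_k}}_2$ and then $\sum_k\norm{\bm{u}_{\mathcal{A}_k}}_2\le\sqrt{K}\norm{\bm{u}}_2$ (Cauchy--Schwarz over the $K$ blocks), which is exactly where the $\sqrt{K}\max_k\sqrt{q_k}$ factor in the stated rate originates. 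For the quadratic piece I would show $\bm{u}^T\hat{\mat{H}}_{\mathcal{A}\mathcal{A}}\bm{u}\ge\frac12\lambda_{\min}(\bar{\mat{H}}_{\mathcal{A}\mathcal{A}})\norm{\bm{u}}_2^2$ on a high-probability event by bounding the operator-norm deviation $\norm{\hat{\mat{H}}_{\mathcal{A}\mathcal{A}}-\bar{\mat{H}}_{\mathcal{A}\mathcal{A}}}_2$, where $\lambda_{\min}(\bar{\mat{H}}_{\mathcal{A}\mathcal{A}})$ is bounded below by a constant depending on $\Lambda_{\min}$ from (A2). Combining the three yields $G(\bm{u})\ge\frac{c}{2}r^2-\big(\norm{\nabla L_N(\bar{\bm{\beta}})}_\infty+\max_k\lambda_{N,k}\big)\sqrt{K}\max_k\sqrt{q_k}\,r$, which is positive once $r$ exceeds a constant multiple of $\sqrt{K}\max_k\sqrt{q_k}\lambda_{N,k}$, provided the gradient concentrates at that same rate.

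The crux, and the step I expect to be hardest, is the gradient concentration bound $\norm{\nabla L_N(\bar{\bm{\beta}})}_\infty=O_p\big(\sqrt{m_k\log p/N}\big)$. At $\bm{\beta}=\bar{\bm{\beta}}$ the residual $(I)+(II)$ equals the noise entry $\tensor{T}_{i_{[1:K]}}$ by the element-wise Sylvester relation \eqref{eqn:elementwise_tensor_sylvester}, so each gradient coordinate is an $N$-sample average of a product $\tensor{T}_{i_{[1:K]}}\,\tensor{X}_{i_{[1:k-1]},j_k,i_{[k+1:K]}}$. Because $\tensor{T}$ and $\tensor{X}$ are both linear images of the same subgaussian source, these products are subexponential rather than subgaussian, and $\tensor{T}$ is \emph{not} independent of $\tensor{X}$; I would therefore control them with a Bernstein-type inequality for subexponential variables under (A1), then take a union bound over the $O(\sum_k m_k^2)$ coordinates to generate the $\log p$ term and the $\exp(-\eta\log p)$ failure probability. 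The extra factor $m_k$ enters because each nodewise score aggregates contributions along a mode-$k$ fiber, inflating its variance proxy by the fiber length, and the sample-size requirement $N>O(\max_k q_k m_k\log p)$ is precisely what forces $r=o(1)$ and keeps $\hat{\mat{H}}_{\mathcal{A}\mathcal{A}}$ positive definite.

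Finally, sign consistency follows directly from the $\ell_2$ error bound. For any coordinate indexed by $(i,j)\in\mathcal{A}_k$ we have $|\hat{\beta}_{(i,j)}-\bar{\beta}_{(i,j)}|\le\norm{\hat{\bm{\beta}}_{\mathcal{A}}-\bar{\bm{\beta}}_{\mathcal{A}}}_2\le C(\bar{\bm{\beta}})\sqrt{K}\max_k\sqrt{q_k}\lambda_{N,k}$, so under the minimal-signal condition $\min_{(i,j)\in\mathcal{A}_k}|(\mat{\Psi}_k)_{i,j}|\ge 2C(\bar{\bm{\beta}})\sqrt{K}\max_k\sqrt{q_k}\lambda_{N,k}$ every estimated entry lies within half the magnitude of its target and therefore cannot cross zero, yielding $\mathrm{sign}(\hat{\bm{\beta}}_{\mathcal{A}_k})=\mathrm{sign}(\bar{\bm{\beta}}_{\mathcal{A}_k})$ on the same high-probability event.
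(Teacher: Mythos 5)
Your proposal follows the same architecture as the paper's proof: with $\tensor{W}$ fixed at $\bar{\tensor{W}}$ the loss is a convex quadratic in $\bm{\beta}$ whose population Hessian is $\bar{\mat{\Sigma}}$, bounded via (A2) (the paper's Lemma B.1); existence and the $\ell_2$ rate come from a boundary-positivity argument on a sphere of radius $\asymp \sqrt{K}\max_k\sqrt{q_k}\lambda_{N,k}$, with the same blockwise $\ell_1$-to-$\ell_2$ conversion producing the $\sqrt{K}\max_k\sqrt{q_k}$ factor (Lemma B.3); the probabilistic inputs are Bernstein-plus-union bounds on the score and on the Hessian deviation (Lemma B.2); and sign consistency is read off the $\ell_2$ bound exactly as in the paper. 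Your convexity argument that \emph{every} minimizer (not just one) lies inside the ball is a clean substitute for the paper's KKT-plus-Lemma-B.4 route, and that part is fine.

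However, there is a genuine gap at the step you yourself single out as the crux: the score bound $\norm{\nabla L_N(\bar{\tensor{W}},\bar{\bm{\beta}})}_\infty = O\big(\sqrt{m_k\log p/N}\big)$. A Bernstein-type inequality controls the deviation of an average from its \emph{expectation}, so your plan silently requires the score coordinates --- averages over fibers and samples of products $\tensor{T}_{i_{[1:K]}}\tensor{X}_{i_{[1:k-1]},j_k,i_{[k+1:K]}}$ --- to be (essentially) centered. You acknowledge that $\tensor{T}$ is not independent of $\tensor{X}$ but never address centering, and centering is exactly the nontrivial, model-specific fact: from $\vecto(\tensor{X}) = \big(\bigoplus_k\bar{\mat{\Psi}}_k\big)^{-1}\vecto(\tensor{T})$ with $\Cov(\vecto(\tensor{T}))=\mat{I}$ one gets $\E\big[\vecto(\tensor{T})\vecto(\tensor{X})^T\big] = \big(\bigoplus_k\bar{\mat{\Psi}}_k\big)^{-1}$, whose off-diagonal entries are generically nonzero, so each product carries a constant-order mean that no union bound or enlarged failure probability can absorb. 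This is precisely where the paper inserts its key claim (proof of Lemma B.2(i)): that the residual at the truth is uncorrelated with the other entries of its fiber and has mean zero. In the nodewise-regression setting of Peng et al., where the parameter is the precision matrix itself, this property is immediate because $\E\big[[\bar{\mat{\Omega}}\vecto(\tensor{X})]_i\,\vecto(\tensor{X})_j\big] = (\bar{\mat{\Omega}}\bar{\mat{\Sigma}})_{ij}=\delta_{ij}$; here the residual is $[\bar{\mat{A}}\vecto(\tensor{X})]_i$ with $\bar{\mat{A}} = \bigoplus_k\bar{\mat{\Psi}}_k$ the \emph{square root} of the precision, so the relevant matrix is $\bar{\mat{A}}\bar{\mat{\Sigma}} = \bar{\mat{A}}^{-1}$, not the identity. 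Your write-up must either prove this orthogonality (equivalently, that the expected score at $\bar{\bm{\beta}}$ is $O(\lambda_{N,k})$) or restructure the argument to control the resulting bias; as proposed, Bernstein concentration leaves the linear term in your boundary expansion of constant order rather than $O(\lambda_{N,k})\sqrt{K}\max_k\sqrt{q_k}\,r$, and the chain of inequalities collapses.
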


\begin{theorem}
Suppose that the conditions of Theorem of 3,1 and (A3) are satisfied. Suppose further that $p=O(N^{\kappa})$ for some $\kappa \geq 0$. Then for $\eta>0$, for $N$ sufficiently large, the solution of \eqref{eqn:restricted_problem} satisfies:
  \begin{align*}
      P_{\bar{\tensor{W}},\bar{\bm{\beta}}} & \Big(\max_{(i,j) \in \mathcal{A}_{k}^c} |L_{N,ij}^{\prime}(\bar{\tensor{W}},\hat{\bm{\beta}}_{\mathcal{A}_{k}},\tensor{X})|<\lambda_{N,k}\Big) \\
      & \geq 1 - O(\exp(-\eta \log p))
  \end{align*} for each $k$, where $L_{N,ij}^{\prime} := \partial L_N / \partial (\mat{\Psi}_k)_{ij}$.
\end{theorem}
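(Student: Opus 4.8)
The plan is to follow the primal--dual witness strategy of \citet{meinshausenbuhlmann06} and \citet{peng2009partial}, specialized to the quadratic pseudolikelihood of SyGlasso. The key structural observation is that, with $\tensor{W}$ fixed at $\bar{\tensor{W}}$, the log-determinant term drops out and $L_N(\bar{\tensor{W}},\bm{\beta},\tensor{X})$ is \emph{exactly} quadratic in $\bm{\beta}$; hence its gradient is affine and the Taylor expansion of $L_{N,ij}'$ about the truth is exact with no remainder. Writing $\bm{\Delta}=\hat{\bm{\beta}}_{\mathcal{A}_k}-\bar{\bm{\beta}}_{\mathcal{A}_k}$, for every off-support coordinate $(i,j)\in\mathcal{A}_k^c$,
\[
L_{N,ij}'(\bar{\tensor{W}},\hat{\bm{\beta}}_{\mathcal{A}_k}) = L_{N,ij}'(\bar{\tensor{W}},\bar{\bm{\beta}}) + L_{N,ij,\mathcal{A}_k}''(\bar{\tensor{W}},\bar{\bm{\beta}})\,\bm{\Delta}.
\]

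First I would use the KKT/stationarity conditions of the restricted problem \eqref{eqn:restricted_problem} on the active block. Stationarity gives $L_{N,\mathcal{A}_k}'(\bar{\tensor{W}},\hat{\bm{\beta}}_{\mathcal{A}_k}) + \lambda_{N,k}\hat{\bm{z}}_{\mathcal{A}_k}=\mat{0}$ for a subgradient with $\|\hat{\bm{z}}_{\mathcal{A}_k}\|_\infty\le 1$, which equals $\text{sign}(\bar{\bm{\beta}}_{\mathcal{A}_k})$ on the high-probability event of Theorem 3.1 (using sign consistency on the support). Applying the exact expansion on the active block and solving for $\bm{\Delta}$ yields $\bm{\Delta}=-[L_{N,\mathcal{A}_k,\mathcal{A}_k}'']^{-1}(L_{N,\mathcal{A}_k}'(\bar{\bm{\beta}}) + \lambda_{N,k}\text{sign}(\bar{\bm{\beta}}_{\mathcal{A}_k}))$, valid since the empirical active Hessian is invertible on that event. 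Substituting back produces the master decomposition
\[
L_{N,ij}'(\hat{\bm{\beta}}_{\mathcal{A}_k}) = \underbrace{L_{N,ij}'(\bar{\bm{\beta}}) - L_{N,ij,\mathcal{A}_k}''[L_{N,\mathcal{A}_k,\mathcal{A}_k}'']^{-1}L_{N,\mathcal{A}_k}'(\bar{\bm{\beta}})}_{R_{ij}} - \lambda_{N,k}\, L_{N,ij,\mathcal{A}_k}''[L_{N,\mathcal{A}_k,\mathcal{A}_k}'']^{-1}\text{sign}(\bar{\bm{\beta}}_{\mathcal{A}_k}).
\]
The second term is the incoherence term; its population version is bounded in magnitude by $\delta<1$ times $\lambda_{N,k}$ by assumption (A3). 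It then suffices to show that the remainder $R_{ij}$ together with the empirical--population gap in the incoherence term stays below $(1-\delta)\lambda_{N,k}$, uniformly over the at most $p$ coordinates in $\mathcal{A}_k^c$, on an event of probability $1-O(\exp(-\eta\log p))$.

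For the remainder $R_{ij}$, the crucial point is that $L_{N,ij}'(\bar{\bm{\beta}})$ is a centered empirical average of per-sample scores, since the population score vanishes at the truth; subgaussianity (A1) together with a union bound over coordinates gives $\max_{ij}|L_{N,ij}'(\bar{\bm{\beta}})|=O(\sqrt{\log p/N})$. The filtered score term is handled the same way after bounding $\|[L_{N,\mathcal{A}_k,\mathcal{A}_k}'']^{-1}\|$ through the eigenvalue control (A2). Because $\lambda_{N,k}\asymp\sqrt{m_k\log p/N}$, these contributions are $o(\lambda_{N,k})$. The main obstacle is the empirical--population transfer of incoherence: I would prove entrywise (and $\ell_\infty\to\ell_\infty$ operator-norm) concentration of the blocks $L_{N,ij,\mathcal{A}_k}''$ and $L_{N,\mathcal{A}_k,\mathcal{A}_k}''$ around $\bar{L}_{ij,\mathcal{A}_k}''$ and $\bar{L}_{\mathcal{A}_k,\mathcal{A}_k}''$, using subgaussian concentration of the quadratic statistics composing the Hessian, and then a Neumann-series perturbation argument to show the empirical incoherence exceeds the population value $\delta$ by only a vanishing amount, keeping the whole term below $\tfrac{1+\delta}{2}\lambda_{N,k}$. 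The sample-complexity requirement $N>O(\max_k q_k m_k\log p)$ enters precisely here: it forces the Hessian perturbation, which scales like $\sqrt{q_k m_k\log p/N}$, to be negligible relative to the gap $1-\delta$. Combining the incoherence bound with the remainder bound and taking a union bound over $(i,j)\in\mathcal{A}_k^c$ and over $k$ then gives $\max_{(i,j)\in\mathcal{A}_k^c}|L_{N,ij}'(\bar{\tensor{W}},\hat{\bm{\beta}}_{\mathcal{A}_k})|<\lambda_{N,k}$ with the stated probability, which is exactly the no-false-edge guarantee.
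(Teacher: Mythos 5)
Your proposal is correct and follows the same primal--dual witness skeleton as the paper (restrict to the sign-consistency event supplied by Theorem 3.1, apply KKT stationarity on the active block, expand the off-support gradient, bound the leading term by incoherence and the rest by concentration), but your algebraic decomposition is genuinely different in one respect. The paper, following Peng et al.\ (2009), solves for the error $v_{N,k}=\hat{\bm{\beta}}_{\mathcal{A}_k}-\bar{\bm{\beta}}_{\mathcal{A}_k}$ by inverting the \emph{population} Hessian $[\bar{L}''_{\mathcal{A}_k,\mathcal{A}_k}]^{-1}$, leaving the empirical--population gaps $D_N = L''_N-\bar{L}''$ as explicit remainder terms multiplied by $v_{N,k}$; the payoff is that the leading term is exactly $-\lambda_{N,k}\bar{L}''_{ij,\mathcal{A}_k}[\bar{L}''_{\mathcal{A}_k,\mathcal{A}_k}]^{-1}\mathrm{sign}(\bar{\bm{\beta}}_{\mathcal{A}_k})$, so (A3) applies verbatim, and the $D_N v_{N,k}$ cross terms are handled using the estimation-consistency bound $\|v_{N,k}\|_2 = O(\sqrt{q_k}\,\lambda_{N,k})$ from Theorem 3.1 together with the concentration bounds of Lemma B.2. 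You instead invert the \emph{empirical} active Hessian, which gives an explicit formula for $\bm{\Delta}$ and a cleaner remainder $R_{ij}$ (no implicit error vector), but shifts the burden onto transferring (A3) from population to empirical Hessian blocks --- an extra perturbation/Neumann-series lemma that the paper's route never needs. Both routes consume the same ingredients (Lemmas B.1--B.2, subgaussianity, eigenvalue bounds) and reach the same conclusion; yours is the Ravikumar--Wainwright-style witness, the paper's is Peng et al.'s. Two small cautions on your version: first, your exact-quadraticity observation is right and is also what makes the paper's expansions remainder-free, though the paper never says so; second, the $\ell_\infty\!\to\!\ell_\infty$ perturbation of the $q_k\times q_k$ active Hessian block built from entrywise $O(\sqrt{\log p/N})$ deviations scales as $q_k\sqrt{\log p/N}$, not $\sqrt{q_k m_k \log p/N}$ as you claim, so your verification that the empirical incoherence stays below $\tfrac{1+\delta}{2}$ needs this corrected rate (it is negligible under the stated sample-size condition only when $q_k=O(m_k)$; the paper sidesteps this entirely and is itself terse here, deferring the analogous remainder bounds to Peng et al.).
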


\begin{theorem}
  Assume the conditions of Theorem 3.2. Then there exists a constant $C(\bar{\bm{\beta}})>0$ such that for any $\eta>0$ the following events hold with probability at least $1 - O(\exp(-\eta \log p))$:
  \begin{itemize}
      \item There exists a global minimizer $\hat{\bm{\beta}}$ to problem \eqref{eqn:objective}.
      \item (Estimation consistency) Any minimizer $\hat{\bm{\beta}}$ of \eqref{eqn:objective} satisfies:
      \begin{equation*}
          \|\hat{\bm{\beta}} - \bm{\beta}\|_2 \leq C(\bar{\bm{\beta}})\sqrt{K}\max_{k}\sqrt{q_{k}}\lambda_{N,k}.          
      \end{equation*}
      \item (Sign consistency) If $\min_{(i,j) \in \mathcal{A}_{k}}|(\mat{\Psi}_k)_{i,j}| \geq 2C(\bar{\bm{\beta}})\max_{k}\sqrt{q_{k}}\lambda_{N,k}$ for each $k$, then sign($\hat{\bm{\beta}}$)=sign($\bar{\bm{\beta}}$).
  \end{itemize}
\end{theorem}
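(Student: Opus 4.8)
The plan is to run a primal--dual witness (KKT) argument that glues Theorems 3.1 and 3.2 together, exploiting the convexity of \eqref{eqn:objective} in the off-diagonal vector $\bm\beta$ (with $\tensor{W}=\bar{\tensor{W}}$ fixed, the loss $L_N$ is a convex quadratic in $\bm\beta$, so the problem is a convex $\ell_1$-regularized program and the subgradient optimality conditions are both necessary and sufficient for global optimality). The witness is the restricted minimizer $\hat{\bm\beta}_{\mathcal{A}}$ supplied by Theorem 3.1, extended by zeros on $\mathcal{A}^c$; call this $\hat{\bm\beta}$. I would show that, on the intersection of the high-probability ``good events'' of Theorems 3.1 and 3.2, this $\hat{\bm\beta}$ satisfies the full KKT conditions and is in fact the \emph{unique} global minimizer of \eqref{eqn:objective}.

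First I would write out the KKT conditions: there must exist a subgradient $z_{ij}$ with $z_{ij}=\mt{sign}(\hat\beta_{ij})$ on the support and $z_{ij}\in[-1,1]$ off it, satisfying $L_{N,ij}'(\bar{\tensor{W}},\hat{\bm\beta},\tensor{X})+\lambda_{N,k}z_{ij}=0$ for every $(k,i,j)$. On $\mathcal{A}$ this stationarity holds by construction, since $\hat{\bm\beta}_{\mathcal{A}}$ minimizes the restricted problem \eqref{eqn:restricted_problem}, and by the sign-consistency clause of Theorem 3.1 we may take $z_{ij}=\mt{sign}(\bar\beta_{ij})=\mt{sign}(\hat\beta_{ij})$. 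On $\mathcal{A}^c$, because $\hat{\bm\beta}$ is zero there, its gradient equals exactly $L_{N,ij}'(\bar{\tensor{W}},\hat{\bm\beta}_{\mathcal{A}_k},\tensor{X})$ --- precisely the quantity controlled in Theorem 3.2 --- which is strictly below $\lambda_{N,k}$ in magnitude with probability at least $1-O(\exp(-\eta\log p))$. Hence the choice $z_{ij}=-L_{N,ij}'/\lambda_{N,k}\in(-1,1)$ is admissible, all KKT conditions hold, and by convexity $\hat{\bm\beta}$ is a global minimizer of \eqref{eqn:objective}, establishing the existence claim.

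To upgrade existence to a statement about \emph{any} minimizer, I would invoke the \emph{strict} dual feasibility ($<\lambda_{N,k}$, not merely $\le$) from Theorem 3.2 together with the positive definiteness of the restricted Hessian $\bar L_{\mathcal{A},\mathcal{A}}''$, which is well-conditioned on the good event by assumption (A2) and a concentration bound relating the sample information to its population counterpart. In the standard primal--dual witness mechanism these two ingredients force every optimal solution of the full problem to vanish on $\mathcal{A}^c$: a nonzero off-support coordinate at an optimum would require its subgradient to attain absolute value one, hence $|L_{N,ij}'|=\lambda_{N,k}$, contradicting the strict bound, while invertibility of $\bar L_{\mathcal{A},\mathcal{A}}''$ rules out flat directions. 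Consequently every minimizer of \eqref{eqn:objective} is feasible for --- and therefore solves --- the restricted problem \eqref{eqn:restricted_problem}, so the ``any solution'' estimation bound $\norm{\hat{\bm\beta}-\bm\beta}_2\le C(\bar{\bm\beta})\sqrt{K}\max_k\sqrt{q_k}\,\lambda_{N,k}$ of Theorem 3.1 transfers verbatim to $\hat{\bm\beta}$, and under the stated signal-strength condition the sign-consistency conclusion transfers as well.

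The main obstacle is this last support-recovery/uniqueness step rather than existence: producing one optimal point is immediate from the witness, but controlling \emph{all} minimizers requires that the off-support gradient bound be strict and that the restricted Fisher information be invertible and uniformly well-conditioned on the good event. Care is also needed to confirm that the gradient $L_{N,ij}'$ evaluated at the zero-extended witness coincides exactly with the quantity bounded in Theorem 3.2, and that the mode-separable structure of the penalty $\sum_k\lambda_{N,k}\norm{\mat{\Psi}_k}_{1,\mt{off}}$ lets the per-coordinate KKT conditions decouple cleanly across the blocks $\mathcal{A}_k$; the incoherence condition (A3), already used to obtain Theorem 3.2, is what ultimately underwrites the strict inequality on which the uniqueness argument rests.
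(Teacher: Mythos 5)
Your proposal is correct and takes essentially the same route as the paper's own (far terser) proof: the paper likewise glues Theorems 3.1 and 3.2 together via the KKT conditions, arguing that on the high-probability events every solution of the restricted problem \eqref{eqn:restricted_problem} solves the full problem \eqref{eqn:objective} and, conversely, every solution of the full problem solves the restricted one, so the estimation and sign-consistency bounds transfer. Your write-up simply makes explicit the primal--dual witness mechanics (the zero-extended restricted minimizer, strict dual feasibility from Theorem 3.2, and the support-containment argument for arbitrary minimizers) that the paper's two-sentence proof leaves implicit.
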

Proofs of the above theorems are given in Appendix \ref{proofs}.

\section{Numerical Illustrations}\label{numeric}
We evaluate the proposed SyGlasso estimator (Algorithm \ref{alg:nodewise_tensor_lasso}) in terms of optimization and graph recovery accuracy. We also compare the graph recovery performance with other models recently proposed for matrix- and tensor-variate precision matrices. We first illustrate the differences among these models by investigating the sparsity pattern of $\bm\Omega$ with $K=3$ modes and $m_k = 4, \forall k$. For simplicity, we generate $\bm\Psi_k$ for $k=1, 2, 3$ as identical $4 \times 4$ precision matrices that follow a one dimensional autoregressive-1 (AR1) process. We recall the KP and KS models:

\begin{figure}[h!] \centering
\begin{subfigure}[t]{0.45\linewidth} \centering
\includegraphics[width=\linewidth]{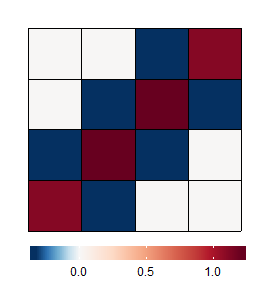}
\caption{$\bm\Psi_k$}
\end{subfigure}
\hspace{7pt}
\begin{subfigure}[t]{0.45\linewidth} \centering
\includegraphics[width=\linewidth]{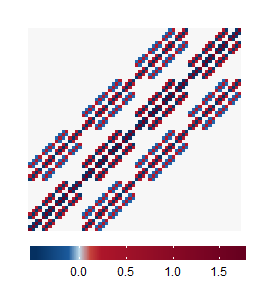}
\caption{KP $\mat{\Omega}$}
\end{subfigure}

\begin{subfigure}[t]{0.45\linewidth} \centering
\includegraphics[width=\linewidth]{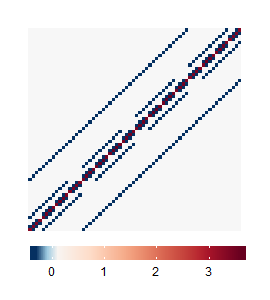}
\caption{KS $\mat{\Omega}$}
\end{subfigure}
\hspace{7pt}
\begin{subfigure}[t]{0.45\linewidth} \centering
\includegraphics[width=\linewidth]{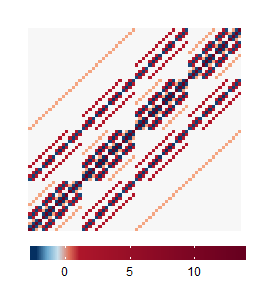}
\caption{SyGlasso $\mat{\Omega}$}
\end{subfigure}
\caption{Comparison of SyGlasso to Kronecker sum (KS) and product (KP) structures. All models are composed of the same components $\mat{\Psi}_k$ for $k=1, 2, 3$ generated as an AR(1) model with $m_k=4$ as shown in (a). The AR(1) components are brought together to create the final $64 \times 64$ precision matrix $\mat{\Omega}$ following (b) the KP structure with $\mat{\Omega} = \bigotimes_{k=1}^3 \mat{\Psi}_k$, (c) the KS structure with $\mat{\Omega}= \bigoplus_{k=1}^3 \mat{\Psi}_k$, and (d) the proposed Sylvester model with $\mat{\Omega}=\left(\bigoplus_{k=1}^3 \mat{\Psi}_k\right)^2$. The KP does not capture nested structures as it simply replicates the individual component with different multiplicative scales. The SyGlasso model admits a precision matrix structure that strikes a balance between KS and KP.}
\label{fig:AR_comparison}
\end{figure}

\textbf{Kronecker Product (KP):} 
The KP model restricts the precision matrix and the covariance matrix to be separable across the $K$ data dimensions and suffers from a multiplicative explosion in the number of edges. As they are separable models and the constructed $\bm\Omega$ corresponds to the direct product of the $K$ graphs, KP is unable to capture more complex nested patterns captured by the KS and SyGlasso models as shown in Figure \ref{fig:AR_comparison} (c) and (d). 

\textbf{Kronecker Sum (KS):} 
The covariance matrix under the KS precision matrix assumption is nonseparable across $K$ data dimensions, and the KS-structured models can be motivated from a maximum entropy point of view. Contrary to the KP structure, the number of edges in the KS structure grows as the sum of the edges of the individual graphs (as a result of Cartesian product of the associated graphs), which leads to a more controllable number of edges in $\bm\Omega$. 

We compare these methods under different model assumptions to explore the flexibility of the proposed SyGlasso model under model mismatch. To empirically assess the efficiency of the proposed model, we generate tensor-valued data based on three different precision matrices. The $\mat{\Psi}_k$'s are generated from one of 1) AR1($\rho$), 2) Star-Block (SB), or 3) Erdos-Renyi (ER) random graph models described in Appendix \ref{sec:simulated_precision_matrix}. 

\begin{figure}[htb] \centering
\begin{subfigure}[t]{0.45\linewidth} \centering
\includegraphics[width=\linewidth]{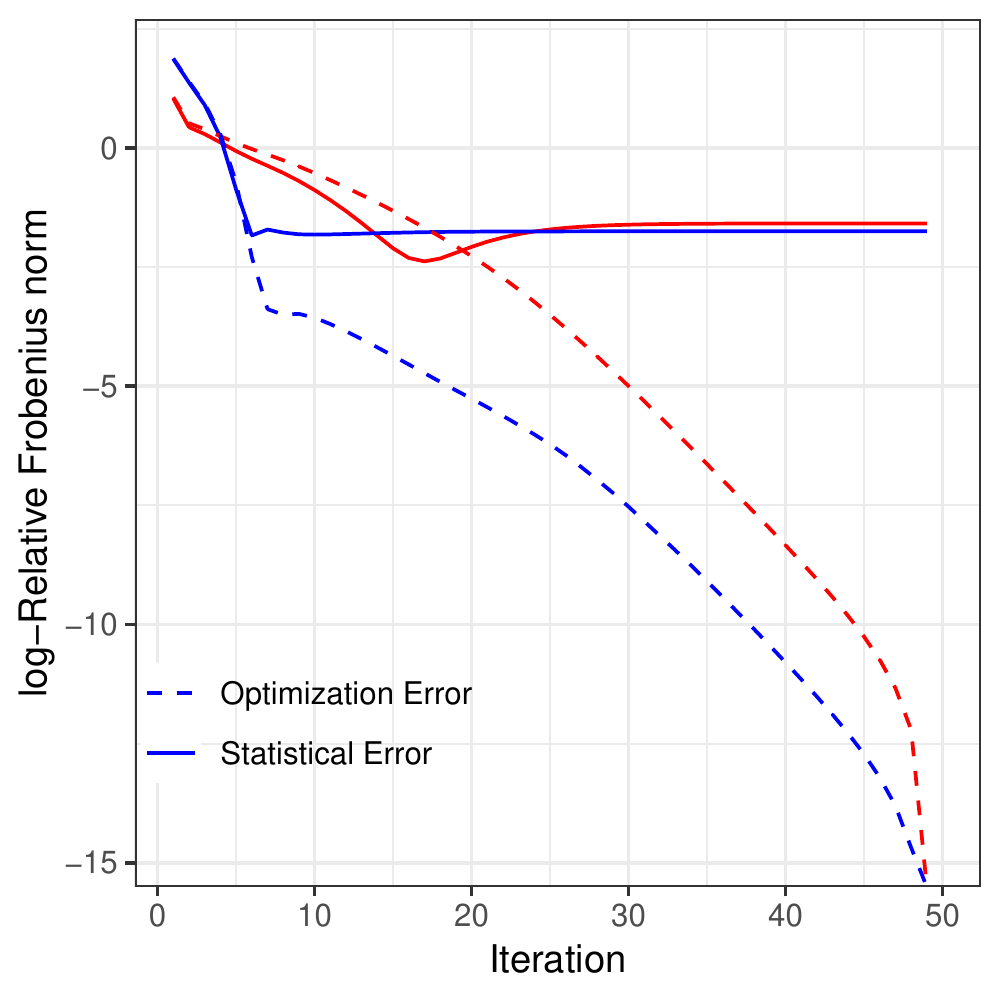}
\caption{SB and AR}
\end{subfigure}
~
\begin{subfigure}[t]{0.45\linewidth} \centering
\includegraphics[width=\linewidth]{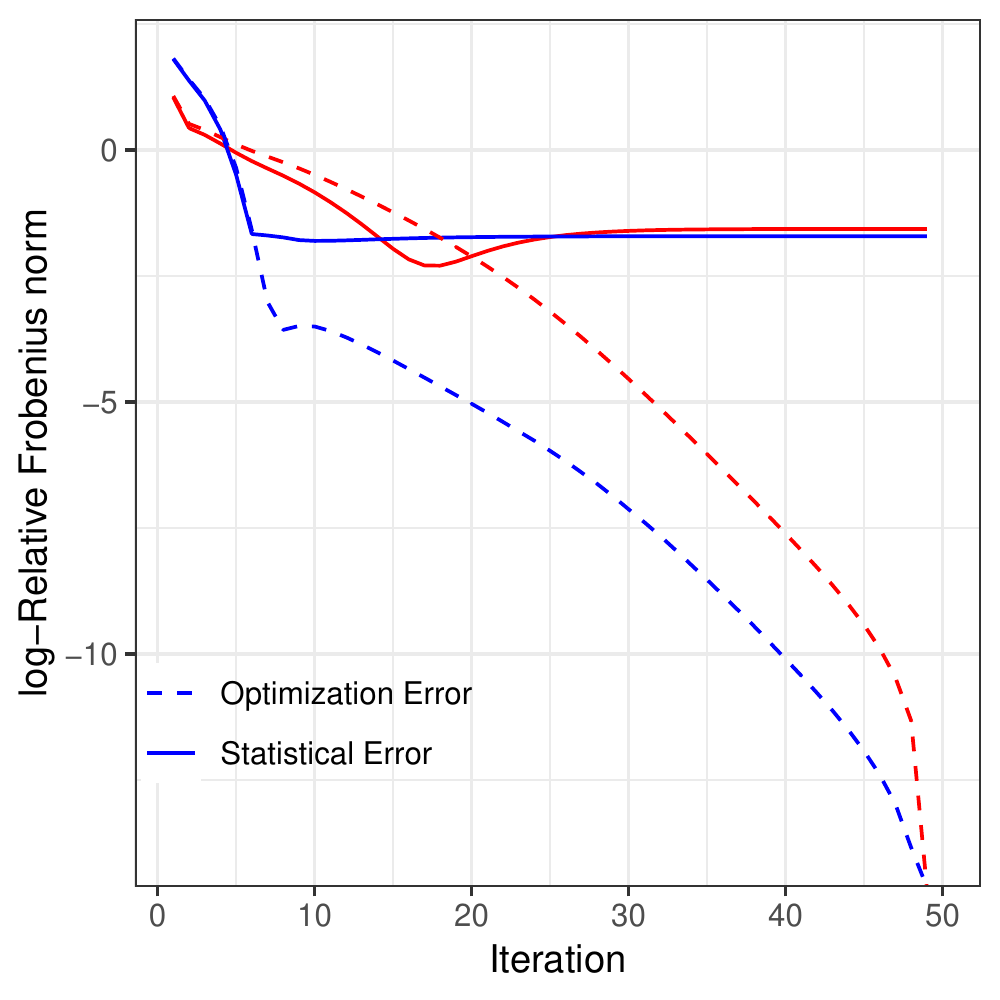}
\caption{SB and ER}
\end{subfigure}
\caption{Performance of the SyGlasso estimator against the number of iterations under different topologies of $\mat{\Psi}_k$'s. The solid line shows the statistical error $\log(\norm{\hat{\mat{\Psi}}_k^{(t)} - \mat{\Psi}_k}_F \bslash \norm{\mat{\Psi}_k}_F)$, and the dotted line shows the optimization error $\log(\norm{\hat{\mat{\Psi}}_k^{(t)} - \hat{\mat{\Psi}}_k}_F \bslash \norm{\hat{\mat{\Psi}}_k}_F)$, where $\hat{\mat{\Psi}}_k$ is the final SyGlasso estimator. The performances of $\mat{\Psi}_1$ and $\mat{\Psi}_2$ are represented by red and blue lines, respectively.}


\label{fig:sim_fnorm}
\end{figure}

We test SyGlasso with $K=2$ under: 1) SB with $\rho=0.6$ and sub-blocks of size $16$ and AR1($\rho=0.6$); 2) SB with $\rho=0.6$ and sub-blocks of size $16$ and ER with $256$ randomly selected edges. In both scenarios we set $m_1=128$ and $m_2=256$ with $10$ samples. Figure \ref{fig:sim_fnorm} shows the iterative optimization performance of Algorithm \ref{alg:nodewise_tensor_lasso}. All the plots for the various scenarios exhibit iterative optimization approximation errors that quickly converge to values below the statistical errors. Note that these plots also suggest that our algorithm can attain linear convergence rates. We also test our method for model selection accuracy over a range of penalty parameters (we set $\lambda_k=\lambda,\forall k$). Figure \ref{fig:sim_roc} displays the sum of false positive rate and false negative rate (FPR+FNR), it suggests that the nodewise SyGlasso estimator is able to fully recover the graph structures for each mode of the tensor data. 

\begin{figure}[htb] \centering
\begin{subfigure}[t]{0.45\linewidth} \centering
\includegraphics[width=\linewidth]{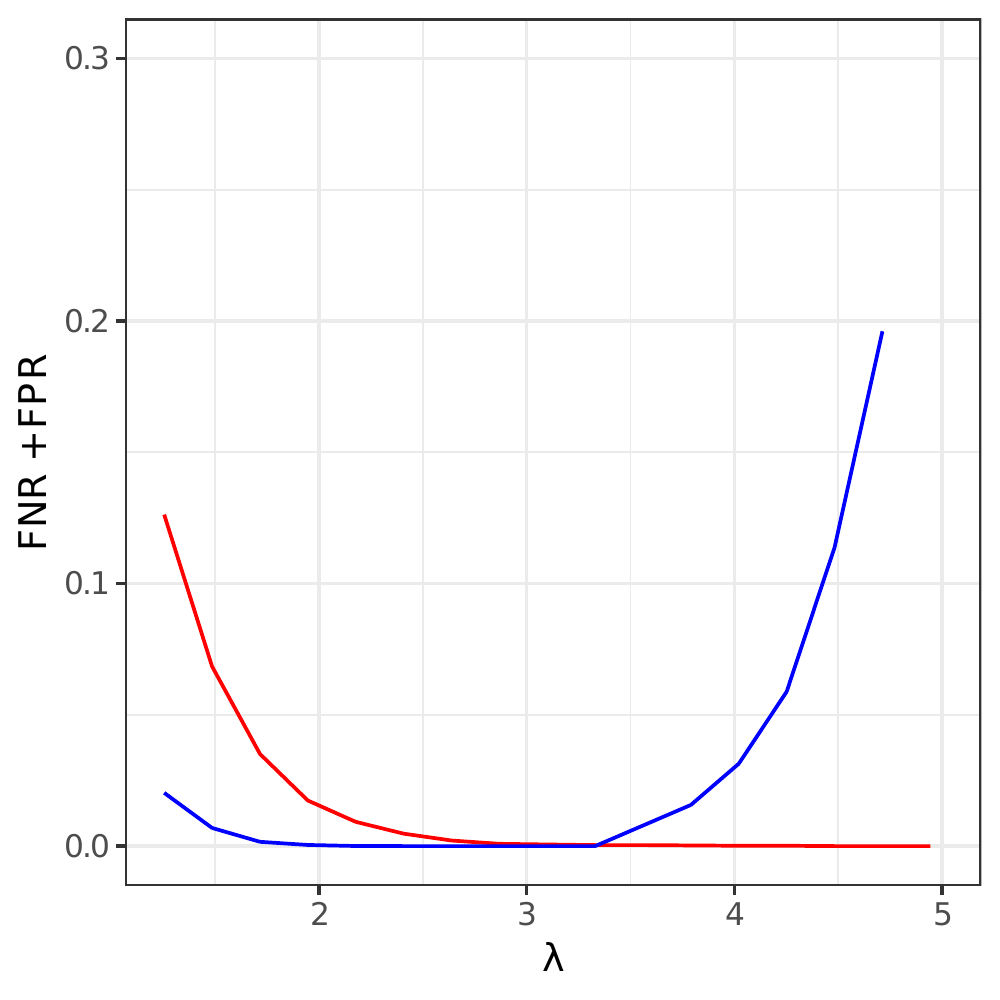}
\caption{SB and AR}
\end{subfigure}
~
\begin{subfigure}[t]{0.45\linewidth} \centering
\includegraphics[width=\linewidth]{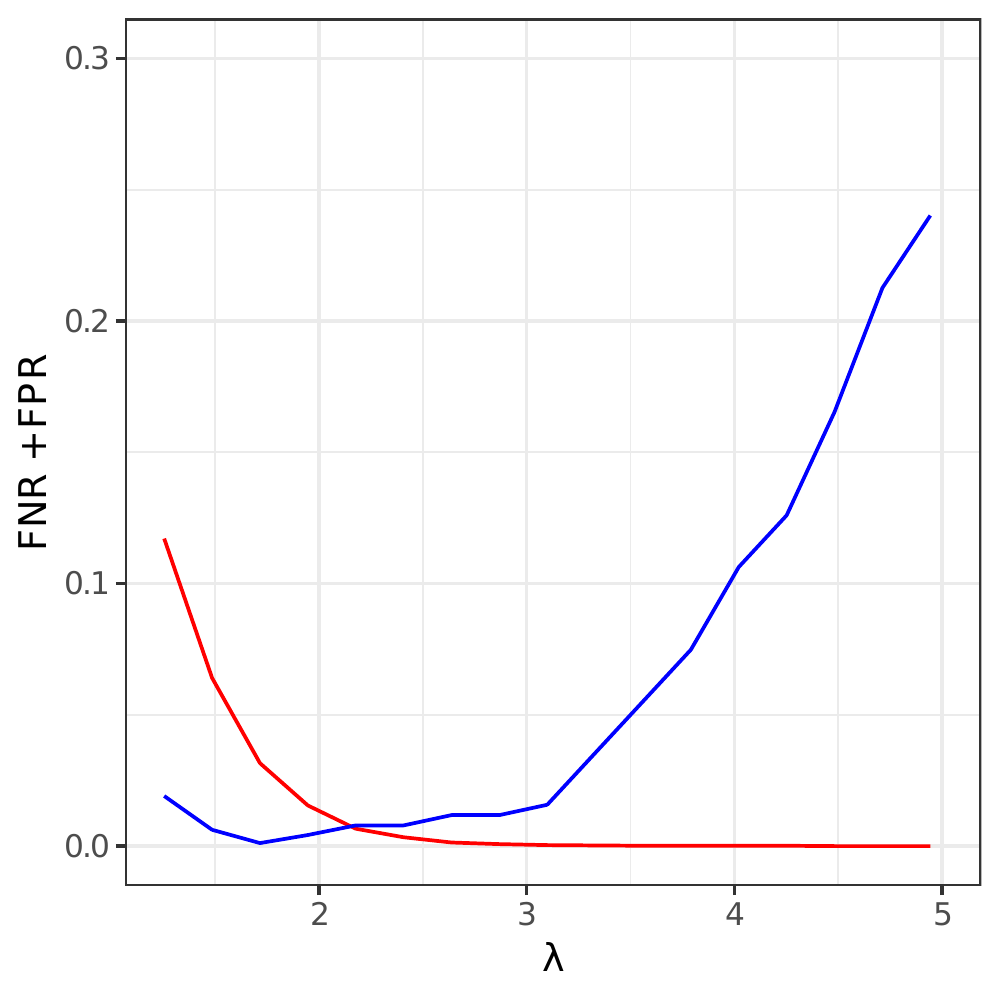}
\caption{SB and ER}
\end{subfigure}
\caption{The performance of model selection measured by FPR + FNR. The performances of $\mat{\Psi}_1$ and $\mat{\Psi}_2$ are represented by red and blue lines, respectively. With an appropriate choice of $\lambda$, the SyGlasso recovers the dependency structures encoded in each $\mat{\Psi}_k$.}
\label{fig:sim_roc}
\end{figure}

We compare the proposed SyGlasso to the TeraLasso estimator \citep{greenewald2017tensor}, and to the Tlasso estimator proposed by \citet{lyu2019tensor} for KP, on data generated using precision matrices $(\mat{\Psi}_1 \oplus \mat{\Psi}_2 \oplus \mat{\Psi}_3)^2$, $\mat{\Psi}_1 \oplus \mat{\Psi}_2 \oplus \mat{\Psi}_3$, and $\mat{\Psi}_1 \otimes \mat{\Psi}_2 \otimes \mat{\Psi}_3$, where $\mat{\Psi}$'s are each $16 \times 16$ ER graphs with $16$ nonzero edges. We use the Matthews correlation coefficient (MCC) to compare model selection performances. The MCC is defined as \citep{matthews1975comparison}
\begin{equation*}
    \text{MCC} = \frac{\text{TP}\times\text{TN}-\text{FP}\times\text{FN}}{\sqrt{(\text{}TP+\text{FP})(\text{TP}+\text{FN})(\text{TN}+\text{FP})(\text{TN}+\text{FN})}},
\end{equation*} where we follow \citet{greenewald2017tensor} to consider each nonzero off-diagonal element of $\mat{\Psi}_k$ as a single edge. 

The results shown in Figure \ref{fig:modelmismatch} indicate that all three estimators perform well when $N=5$, even under model misspecification. In the single sample scenario, the graph recovery performance of each estimator does well under each true underlying data generating process. Note that for data generated using KP, the SyGlasso performs surprisingly well and is comparable to Tlasso. These results seem to indicate that SyGlasso is very robust under model misspecification. The superior performance of SyGlasso under KP model, even with one sample, suggests again that SyGlasso structure has a flavor of both KS and KP structures, as seen in Figure \ref{fig:AR_comparison}. This follows from the observation that  $(\mat{\Psi}_1 \oplus \mat{\Psi}_2)^2 = \mat I_{m_1} \otimes \mat{\Psi}_1^2 + \mat{\Psi}_2^2 \otimes \mat I_{m_2} + 2\mat{\Psi}_1 \otimes \mat{\Psi}_2 = \mat{\Psi}_1^2 \oplus \mat{\Psi}_2^2 + 2\mat{\Psi}_1 \otimes \mat{\Psi}_2$.

\begin{figure}[!htb] \centering
\begin{tabular}{@{}cc@{}}
\qquad \qquad  $N = 1$ & \quad  $N = 5$\\
\rotatebox{90}{\qquad \qquad SyGlasso} \qquad 
\includegraphics[width=0.4\linewidth]{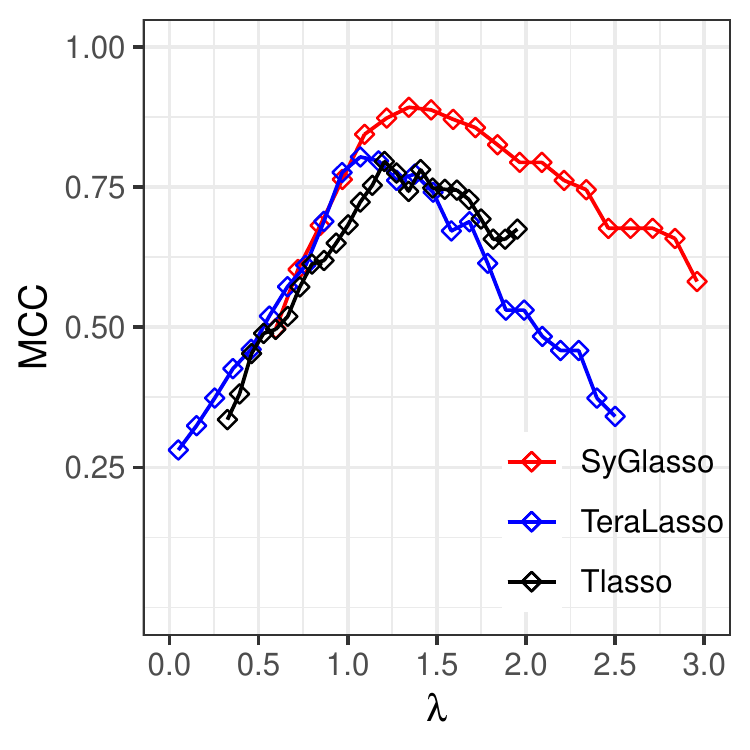}
&
\includegraphics[width=0.4\linewidth]{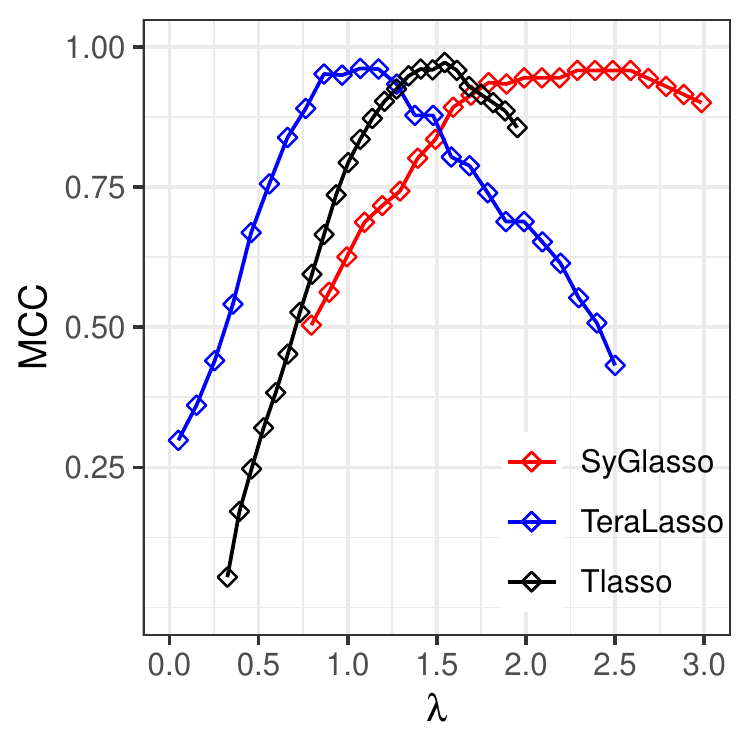}\\
\rotatebox{90}{\qquad \qquad KS} \qquad  
\includegraphics[width=0.4\linewidth]{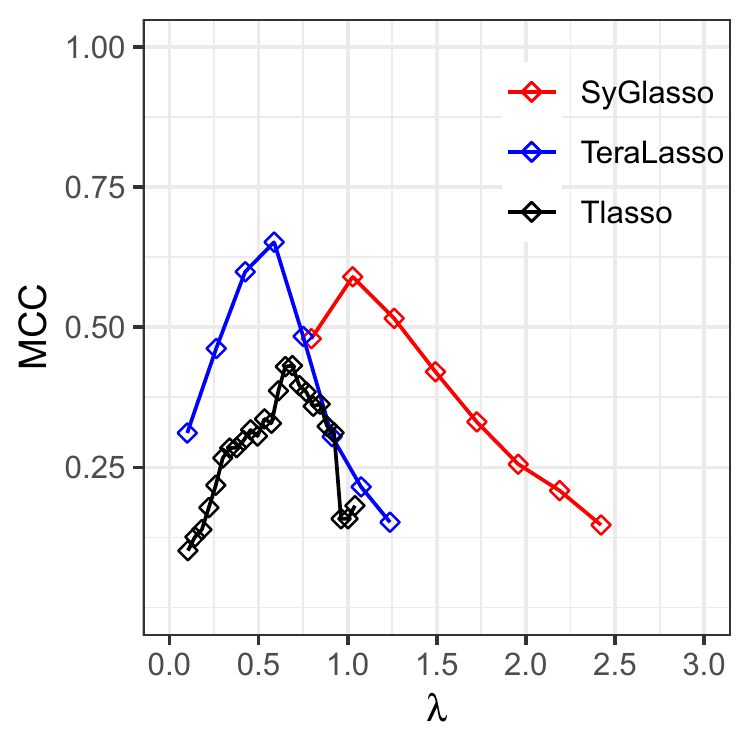}
& 
\includegraphics[width=0.4\linewidth]{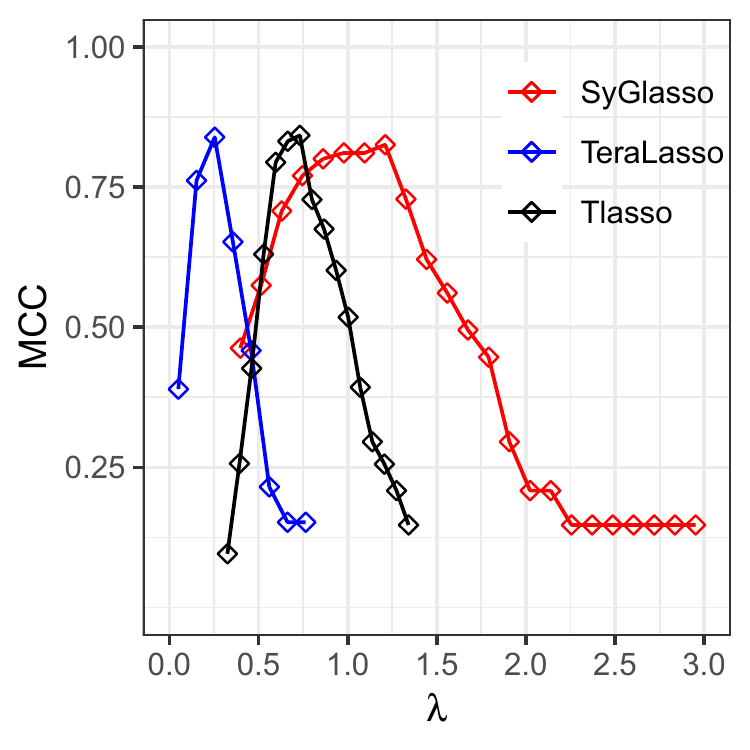}\\
\rotatebox{90}{\qquad \qquad KP} \qquad 
\includegraphics[width=0.4\linewidth]{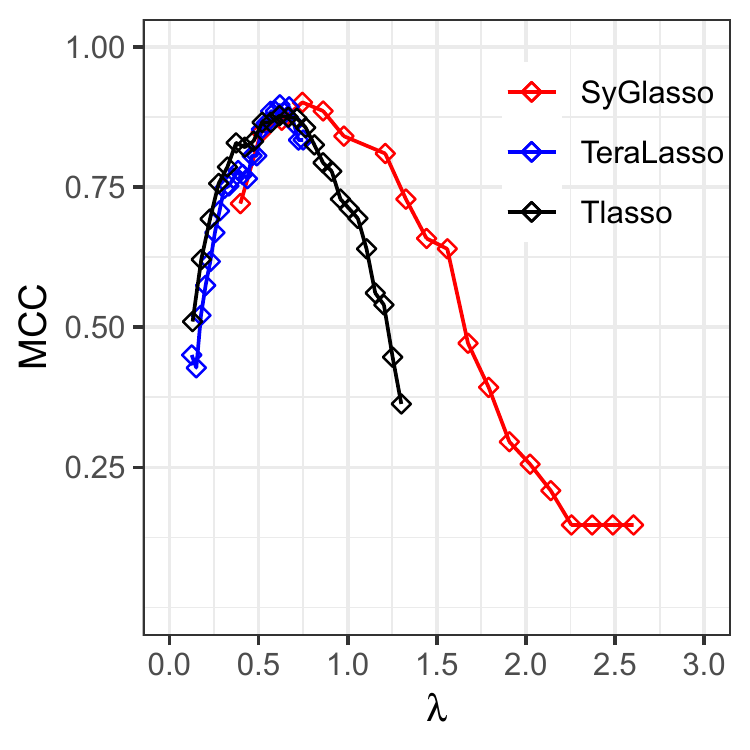}
& 
\includegraphics[width=0.4\linewidth]{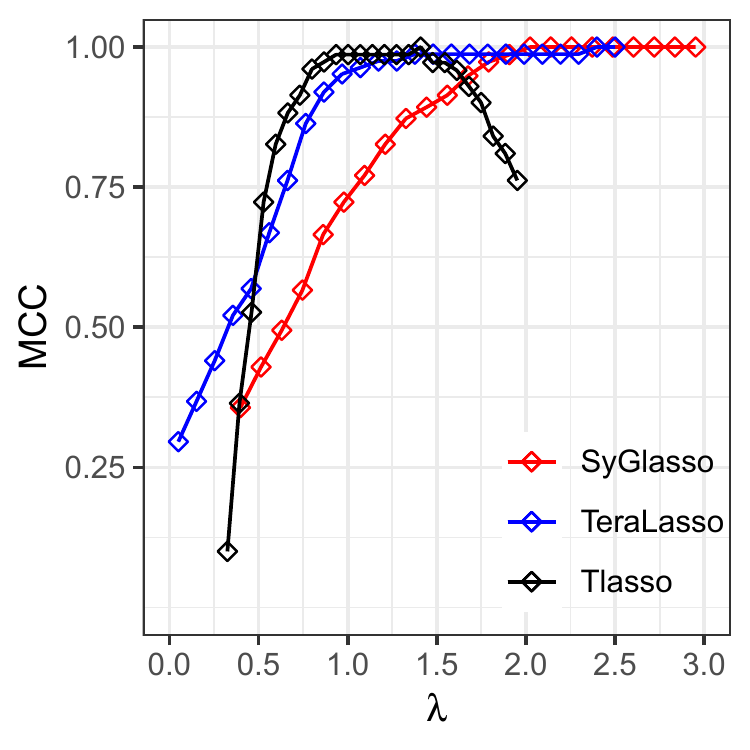}
\end{tabular}
\caption{Performance of SyGlasso, TeraLasso (KS), and Tlasso (KP) measured by MCC under model misspecification. MCC of $1$ represents a perfect recovery of the sparsity pattern in $\mat{\Omega}$, and MCC of $0$ corresponds to random guess. From top to bottom, the synthetic data were generated with the precision matrices from SyGlasso, KS, and KP models. The left column shows the results for a single sample ($N=1$), and the right column shows the results for $N=5$ observations. Note that the SyGlasso has better performance for a single sample (left column) when data is generated from the matched Kronecker model and as good performance for the mismatched Kronecker models.}
\label{fig:modelmismatch}
\end{figure}

\section{EEG Analysis}
We revisit the alcoholism study conducted by \citet{zhang1995event} to explore multiway relationships in EEG measurements of alcoholic and control subjects. Each of 77 alcoholic subjects and 45 control subjects was visually stimulated by either a single picture or a pair of pictures on a computer monitor. Following the analyses of \citet{zhu2016bayesian} and \citet{qiao2019functional}, we focus on the $\alpha$ frequency band (8 - 13 Hz) that is known to be responsible for the inhibitory control of the subjects (see \citet{knyazev2007motivation} for more details). The EEG signals were bandpass filtered with the cosine-tapered window to extract $\alpha$-band signals. Previous Gaussian graphical models applied to such $\alpha$ frequency band filtered EEG data could only estimate the connectivity of the electrodes as they cannot be generalized to tensor valued data. The SyGlasso reveals similar dependency structure as reported in \citet{zhu2016bayesian} and \citet{qiao2019functional} while recovering the chain structure of the temporal relationship.

Specifically, after the band-pass filter was applied, we work with the tensor data $\tensor{X}_{alcoholic}, \tensor{X}_{control} \in \mathbb{R}^{m_{nodes} \times m_{time} \times m_{trial}}$ corresponding to an alcoholic subject and a control subject. We simultaneously estimate $\mat{\Psi}_{node} \in \mathbb{R}^{m_{node} \times m_{node}}$ that encodes the dependency structure among electrodes and $\mat{\Psi}_{time} \in \mathbb{R}^{m_{time}\times m_{time}}$ that shows the relationship among time points that span the duration of each trial. Previous studies consider the average of all trials, for each subject and use the number of subjects as observations to estimate the dependency structures among $64$ electrodes. Instead, we look at one subject at a time and consider different experimental trials as observations. Our analysis focuses on recovering the precision matrices of electrodes and time points, but it can be easily generalized to estimate the dependency structure among trials as well.
\begin{figure}[!thb] \centering
\begin{subfigure}[t]{0.5\linewidth} \centering
\includegraphics[width=\linewidth]{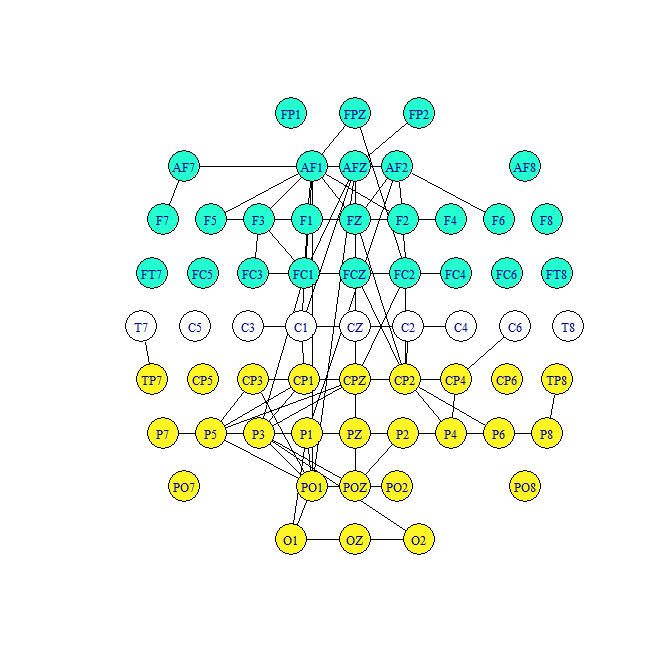}
\caption{Alcoholic subject}
\end{subfigure}
\hspace{-20pt}
\begin{subfigure}[t]{0.5\linewidth} \centering
\includegraphics[width=\linewidth]{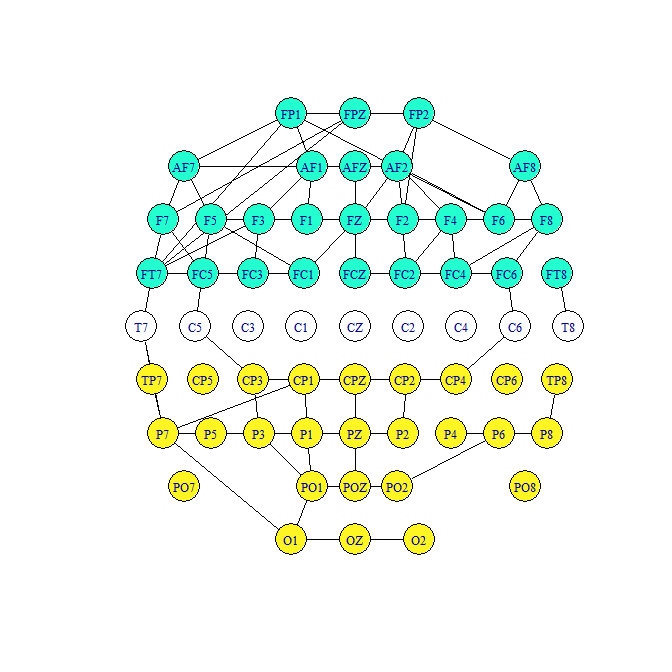}
\caption{Control subject}
\end{subfigure}
\caption{Estimated brain connectivity results from SyGlasso for (a) the alcoholic subject and (b) the control subject. The blue nodes correspond to the frontal region, and the yellow nodes correspond to the parietal and occipital regions. The alcoholic subject has asymmetric brain connections in the frontal region compared to the control subject.}
\label{fig:alcoholism_node_result}
\end{figure}

Figure \ref{fig:alcoholism_node_result} shows the result of the SyGlasso estimated network of electrodes. For comparison, both graphs were thresholded to match 5\% sparsity level. Similar to the findings of \citet{qiao2019functional}, our estimated graph $\bm\Psi_{node}$ for the alcoholic group shows the asymmetry between the left and the right side of the brain compared to the more balanced control group. Our finding is also consistent with the result in \citet{hayden2006patterns} and \citet{zhu2016bayesian} that showed frontal asymmetry of the alcoholic subjects. 

\begin{figure}[!htb] \centering
\begin{subfigure}[t]{0.45\linewidth} \centering
\includegraphics[width=\linewidth]{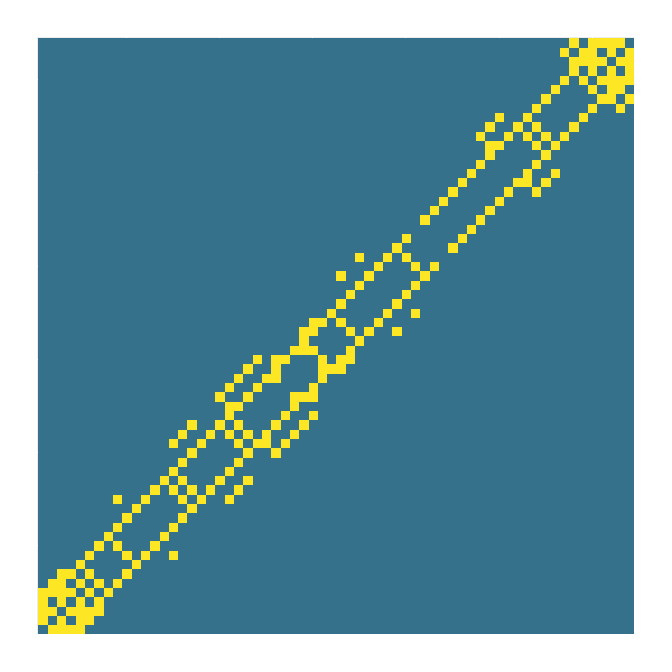}
\caption{Alcoholic subject}
\end{subfigure}
\begin{subfigure}[t]{0.45\linewidth} \centering
\includegraphics[width=\linewidth]{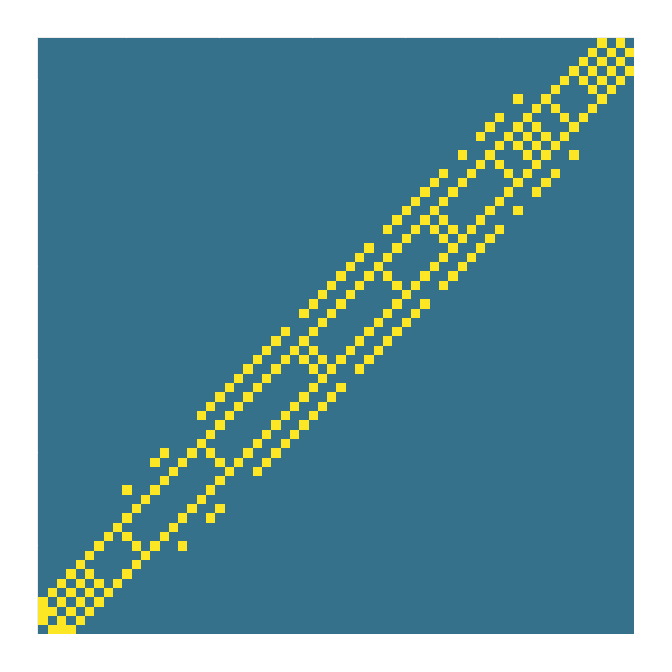}
\caption{Control subject}
\end{subfigure}
\caption{Support (off-diagonals) of SyGlasso-estimated temporal Sylvester factor $\hat{\mat{\Psi}}_{time}$ of the precision matrix for (a) the alcoholic subject and (b) the control subject. Both subjects exhibit banded conditional dependency structures over time.}
\label{fig:alcoholism_time_result}
\end{figure}
While previous analyses on this EEG data using graphical models only focused on the precision matrix of the electrodes, here we exhibit in Figure \ref{fig:alcoholism_time_result} the second precision matrix that encodes temporal dependency. Overall both subjects exhibit banded dependency structures over time, since adjacent timepoints are conditionally dependent. However, note that the conditional dependency structure of the timepoints for the alcoholic subject appears to be more chaotic.

\section{Discussion}
This paper proposed Sylvester-structured graphical model and an inference algorithm, the SyGlasso, that can be applied to tensor-valued data. The current tools available for researchers are limited to Kronecker product and Kronecker sum models on either the covariance or the precision matrix. Our model is motivated by a generative stochastic representation based on the Sylvester equation. We showed that the resulting precision matrix corresponds to the squared Kronecker sum of the precision matrices $\mat{\Psi}_k$ along each mode. The individual components $\mat{\Psi}_k$'s are estimated by the nodewise regression based approach. 

There are several promising future directions to take the proposed SyGlasso. First is to relax the assumption that the diagonals of the factors are fixed - an assumption that is standard among the Kronecker structured models for theoretical analysis. Practically, SyGlasso is able to recover the off-diagonals of the individual $\bm\Psi_k$ and the diagonal of $\bm\Omega$, which only requires to estimating $\bigoplus_{k=1}^K \text{diag}(\bm{\Psi}_k)$ instead of all diagonal entries $\text{diag}(\bm{\Psi_k})$ for all $k$. However, we believe that analyzing the sparsity pattern of the squared Kronecker sum matrix would help us estimate the diagonal entries of the individual components $\mat{\Psi}_k$'s. In addition, it would be worthwhile to study optimization procedures that perform matrix-wise estimation of $\bm\Psi_k$ (yielding simultaneous estimates for both off-diagonal and diagonal entries) and compare its empirical and theoretical properties with the approach proposed in this paper.

Secondly, in terms of the statistical properties, our theoretical results guarantee sparsistency of the individual graphs with a slower convergence rate than that is proposed in \citet{greenewald2017tensor}, while empirical evidence suggests that a faster rate can be achieved. Improvement of this statistical convergence rate analysis will be worthwhile. Also, our results do not guarantee the statistical convergence of individual $\mat\Psi_k$'s nor $\bm\Omega$ with respect to the operator norm. Similar to the solution proposed in \citet{zhou2011high}, we plan to adopt a two-step procedure using SyGlasso for variable selection followed by refitting the precision matrix $\mat{\Omega}$ using maximum likelihood estimation with edge constraint.

Lastly, an exciting future direction is to investigate the utility of SyGlasso as a tool to perform system identification for systems that can be modeled by Sylvester approximations to differential equations. Such dynamical systems include, for example, physical processes governed by separable elliptic PDEs as described in \citet{grasedyck2004existence}, \citet{kressner2010krylov}. It is likely that myriad physical processes can be well-modeled using a Sylvester equation, leading to sparse precision matrices (e.g., a space-time process satisfying the Poisson equation).

\subsubsection*{Acknowledgments}
The authors acknowledge US agency support by grants ARO W911NF-15-1-0479 and DOE DE-NA0003921.

\vfill
\clearpage

\bibliographystyle{apalike}
\bibliography{syglasso_aistats}

\vfill
\clearpage

\appendix
\onecolumn

\section*{Appendix}
\begin{itemize}
    \item[] \textbf{A} provides the detailed derivation of the updates for Algorithm \ref{alg:nodewise_tensor_lasso}. 
    \item[] \textbf{B} provides the proofs of theorems stated in Section \ref{thm}. 
    \item[] \textbf{C} provides details on the simulated data in Section \ref{numeric}. 
\end{itemize}

\section{Derivation of the Nodewise Tensor Lasso Estimator}
\label{sec:sylvester_teralasso_derivation}

\subsection{Off-Diagonal updates}
\label{subsec:derivation_offdiag}
For $1 \leq i_k < j_k \leq m_k$, $T_{i_kj_k}(\mat{\Psi}_k^{\text{off}})$ can be computed in closed form:
\begin{equation}\label{eqn:update_offdiag}
    (T_{i_kj_k}(\mat{\Psi}_k))_{i_kj_k}^{\text{off}} = 
    \frac{S_{\frac{\lambda_k}{N}}\Big(F_{\tensor{X},\{\mat{\Psi}_k\}_{k=1}^K}\Big)}
    {
    (\frac{1}{N}\tensor{X}_{(k)}\tensor{X}_{(k)}^T)_{i_ki_k} + (\frac{1}{N}\tensor{X}_{(k)}\tensor{X}_{(k)}^T)_{j_kj_k}
    },
\end{equation}
where
\begin{align*}
        F_{\tensor{X},\{\mat{\Psi}_k\}_{k=1}^K} = - \frac{1}{N} &
        \Bigg(\Big((\tensor{W}_{(k)} \circ \tensor{X}_{(k)}) \tensor{X}_{(k)}^T \Big)_{i_kj_k} + \Big((\tensor{W}_{(k)} \circ \tensor{X}_{(k)}) \tensor{X}_{(k)}^T \Big)_{j_ki_k} \\
        & + \Big(\tensor{X}_{(k)}(\tensor{X} \times_k \mat{\Psi}_k^{\text{off},i_kj_k})^T_{(k)}\Big)_{j_ki_k} + \Big(\tensor{X}_{(k)}(\tensor{X} \times_k \mat{\Psi}_k^{\text{off},i_kj_k})^T_{(k)}\Big)_{i_kj_k} \\
        & \quad + \sum_{l \neq k} \Big(\tensor{X}_{(k)}(\tensor{X} \times_l \mat{\Psi}_l^{\text{off}})^T_{(k)}\Big)_{i_kj_k} + \sum_{l \neq k} \Big(\tensor{X}_{(k)}(\tensor{X} \times_l \mat{\Psi}_l^{\text{off}})^T_{(k)}\Big)_{j_ki_k}
        \Bigg).
\end{align*} 
Here the $\circ$ operator denotes the Hadamard product between matrices; $\mat{\Psi}_k^{\text{off},i_kj_k}$ is $\mat{\Psi}_k^{\text{off}}$ with the $(i_k,j_k)$ entry being zero; and $S_{\lambda}(x):=\text{sign}(x)(|x|-\lambda)_{+}$ is the soft-thresholding operator. 

\subsection{Diagonal updates}
\label{subsec:derivation_diag}
For $\tensor{W}$,
\begin{equation}
\label{eqn:update_diag}
(T(\tensor{W}))_{i_{[1:K]}}
    = \frac{-\Big(\tensor{X}_{(N)}^T\tensor{Y}_{(N)}\Big)_{i_{[1:K]}}+\sqrt{\Big(\tensor{X}_{(N)}^T\tensor{Y}_{(N)}\Big)_{i_{[1:K]}}^2+4\Big(\tensor{X}_{(N)}\tensor{X}_{(N)}^T\Big)_{i_{[1:K]}}}}
    {2 \Big(\tensor{X}_{(N)}\tensor{X}_{(N)}^T\Big)_{i_{[1:K]}}}.
\end{equation}
Here we define $\tensor{Y}:=\sum_{k=1}^K \Big(\tensor{X} \times_k \mat{\Psi}_k^{\text{off}} \Big)$. Equations \eqref{eqn:update_offdiag} and \eqref{eqn:update_diag} give necessary ingredients for designing a coordinate descent approach to minimizing the objective function in \eqref{eqn:objective}. The optimization procedure is summarized in Algorithm \ref{alg:nodewise_tensor_lasso}.
\subsection{Derivation of updates}
Note that for $1 \leq i_k < j_k \leq m_k$, $1 \leq k \leq K$,
\begin{align*}
    & Q_N(\{\mat{\Psi}_k\}_{k=1}^K) \\ 
    & = (N/2)\Big(\sum_{i_{[1:k-1,k+1:K]}} ({\tensor{X}_{i_{[1:K]}}^{i_k}}^2 +{\tensor{X}_{i_{[1:K]}}^{j_k}}^2)\Big)\Big((\mat{\Psi}_k)_{i_kj_k}\Big)^2 \\ 
    & + N F_{\tensor{X},\{\mat{\Psi}\}_{k=1}^K} (\mat{\Psi}_k)_{i_kj_k} + \lambda_k|(\mat{\Psi}_k)_{i_kj_k}| \\ 
    & + \text{terms independent of $(\mat{\Psi}_k)_{i_kj_k}$},
\end{align*} where

\begin{equation*}
  \begin{aligned}
        F_{\tensor{X},\{\mat{\Psi}\}_{k=1}^K} = - \sum_{i_{[1:k-1,k+1:K]}} &
        \Big( \tensor{W}_{i_{[1:K]}}^{i_k} \tensor{X}_{i_{[1:K]}}^{i_k}\tensor{X}_{i_{[1:K]}}^{j_k} + \tensor{W}_{i_{[1:K]}}^{j_k} \tensor{X}_{i_{[1:K]}}^{j_k}\tensor{X}_{i_{[1:K]}}^{i_k}\\
        & \quad + (\mat{\Psi}_k)_{i_k,\bslash \{i_k,j_k\}}^T \tensor{X}_{i_{[1:K]}}^{\bslash \{i_k,j_k\}} \tensor{X}_{i_{[1:K]}}^{j_k}  \\
        & \quad + (\mat{\Psi}_k)_{j_k,\bslash \{i_k,j_k\}}^T \tensor{X}_{i_{[1:K]}}^{\bslash \{i_k,j_k\}}
        \tensor{X}_{i_{[1:K]}}^{i_k}\\
        & \quad + \sum_{l \in [1:k-1,k+1:K]}
        (\mat{\Psi}_l)_{i_l,\bslash i_l}^T \tensor{X}_{i_{[1:K]}}^{i_k,\bslash i_l} \tensor{X}_{i_{[1:K]}}^{j_k} \\
        & \quad + \sum_{l \in [1:k-1,k+1:K]}
        (\mat{\Psi}_l)_{i_l,\bslash i_l}^T \tensor{X}_{i_{[1:K]}}^{j_k,\bslash i_l} \tensor{X}_{i_{[1:K]}}^{i_k}
         \Big).
    \end{aligned}
\end{equation*} Here $\tensor{X}_{i_{[1:K]}}^{i_k}$ denotes the element of $\tensor{X}$ indexed by $i_{[1:K]}$ except that the $k$th index is replaced by $i_k$ and $\tensor{X}_{i_{[1:K]}}^{i_k,j_l}$ denotes the element of $\tensor{X}$ indexed by $i_{[1:K]}$ except that the $k,l$th indices are replaced by $i_k,j_l$. Note the following equivalence:
\begin{align*}
    & \sum_{i_{[1:k-1,k+1:K]}} \tensor{W}_{i_{[1:K]}}^{i_k} \tensor{X}_{i_{[1:K]}}^{i_k}\tensor{X}_{i_{[1:K]}}^{j_k}=\Big((\tensor{W}_{(k)} \circ \tensor{X}_{(k)}) \tensor{X}_{(k)}^T \Big)_{i_kj_k} \\ & \sum_{i_{[1:k-1,k+1:K]}}\tensor{X}_{i_{[1:K]}}^{i_k}\tensor{X}_{i_{[1:K]}}^{j_k} = (\tensor{X}_{(k)}\tensor{X}_{(k)}^T)_{i_kj_k} \\ & \sum_{i_{[1:k-1,k+1:K]}}(\mat{\Psi}_l)_{i_l,.}^T \tensor{X}_{i_{[1:K]}}^{i_k,.} \tensor{X}_{i_{[1:K]}}^{j_k} = \Big(\tensor{X}_{(k)}(\tensor{X}\times_l\mat{\Psi}_l)_{(k)}^T\Big)_{j_ki_k},
\end{align*}
where $\tensor{W}$ is a tensor of the same dimensions of $\tensor{X}$, formed by tensorize values in $\tensor{W}$, and in the case of $N>1$ the last mode of $\tensor{W}$ is the observation mode similarly to $\tensor{X}$ but with exact replicates. Using the tensor notation and standard sub-differential method, Equation \eqref{eqn:update_offdiag} then follows. 

For $\tensor{W}_{i_{[1:K]}}$, using similar tensor operations,
\begin{align*}
    & \frac{\partial}{\partial \tensor{W}_{i_{[1:K]}}} Q_N(\tensor{W},\{\mat{\Psi}_k^{\text{off}}\}_{k=1}^K)  = 0 \\
    & \iff -\frac{1}{\tensor{W}_{i_{[1:K]}}} + \tensor{W}_{i_{[1:K]}}^2 \tensor{X}_{i_{[1:K]}}^2 + \tensor{W}_{i_{[1:K]}}\Big(\tensor{X}_{i_{[1:K]}}\sum_{k=1}^K(\tensor{X} \times_k \mat{\Psi}_k^{\text{off}})_{i_{[1:K]}})\Big) = 0 \\
    & \iff \tensor{W}_{i_{[1:K]}}^2 \Big(\tensor{X}_{(N)}^T\tensor{X}_{(N)}\Big)_{i_{[1:K]}} + \tensor{W}_{i_{[1:K]}} \Big(\tensor{X}_{(N)}^T\sum_{k=1}^K(\tensor{X} \times_k \mat{\Psi}_k^{\text{off}})\Big)_{i_{[1:K]}} - 1 = 0
\end{align*} which is a quadratic equation in $\tensor{W}_{i_{[1:K]}}$ and since $\tensor{W}_{i_{[1:K]}}>0$, so the positive root has been retained as the solution. Note that the estimation for one entry of $\tensor{W}$ is independent of the other entries. So during the estimation process we update all the entries at once by noting that $\diag\Big(\tensor{X}_{(N)}^T\tensor{X}_{(N)}\Big)=\Big(\Big(\tensor{X}_{(N)}^T\tensor{X}_{(N)}\Big)_{i_{[1:K]}}, \forall i_{[1:K]} \Big)$.

\section{Proofs of Main Theorems}\label{proofs}
We first list some properties of the loss function.

\begin{lemma}
The following is true for the loss function:
\begin{enumerate}[label=(\roman*)]
    \item There exist constants $0 < \Lambda_{\min}^L \leq \Lambda_{\max}^L < \infty$ such that for $\mathcal{S}_{k}:=\{(i_k,j_k):1 \leq i_k < j_k \leq m_k\},k=1,\dots,K$,
    \begin{equation*}
        \Lambda_{\min}^L \leq \lambda_{\min}(\bar{L}^{\prime\prime}_{\mathcal{S}_{k},\mathcal{S}_{k}}(\bar{\bm{\beta}})) \leq \lambda_{\max}(\bar{L}^{\prime\prime}_{\mathcal{S}_{k},\mathcal{S}_{k}}(\bar{\bm{\beta}})) \leq \Lambda_{\max}^L
    \end{equation*}
    \item There exists a constant $K(\bar{\bm{\beta}})<\infty$ such that for all $1 \leq i_k < j_k \leq m_k$, $\bar{L}^{\prime\prime}_{i_kj_k,i_kj_k}(\bar{\bm{\beta}}) \leq K(\bar{\bm{\beta}})$
    \item There exist constant $M_1(\bar{\bm{\beta}}), M_2(\bar{\bm{\beta}})<\infty$, such that for any $1 \leq i_k < j_k \leq m_k$
    \begin{equation*}
        \Var_{\bar{\tensor{W}},\bar{\bm{\beta}}}(L^{\prime}_{i_kj_k}(\bar{\tensor{W}},\bar{\bm{\beta}},\tensor{X})) \leq M_1(\bar{\bm{\beta}}), \ \Var_{\bar{\tensor{W}},\bar{\bm{\beta}}}(L^{\prime\prime}_{i_kj_k,i_kj_k}(\bar{\tensor{W}},\bar{\bm{\beta}},\tensor{X})) \leq M_2(\bar{\bm{\beta}})
    \end{equation*}
    \item There exists a constant $0 < g(\bar{\bm{\beta}}) <\infty$, such that for all $(i,j) \in \mathcal{A}_{k}$
    \begin{equation*}
        \bar{L}^{\prime\prime}_{ij,ij}(\bar{\tensor{W}},\bar{\bm{\beta}}) - \bar{L}^{\prime\prime}_{ij,\mathcal{A}_{k}^{ij}}(\bar{\tensor{W}},\bar{\bm{\beta}})[\bar{L}^{\prime\prime}_{\mathcal{A}_{k}^{ij},\mathcal{A}_{k}^{ij}}(\bar{\tensor{W}},\bar{\bm{\beta}})]^{-1}\bar{L}^{\prime\prime}_{\mathcal{A}_{k}^{ij},ij}(\bar{\tensor{W}},\bar{\bm{\beta}}) \geq g(\bar{\bm{\beta}}),
    \end{equation*} where $\mathcal{A}_{k}^{ij}:=\mathcal{A}_{k}/\{(i,j)\}$.
    \item There exists a constant $M(\bar{\bm{\beta}})<\infty$, such that for any $(i,j) \in \mathcal{A}_{k}^c$
    \begin{equation*}
        \|\bar{L}^{\prime\prime}_{ij,\mathcal{A}_{k}}(\bar{\tensor{W}},\bar{\bm{\beta}})[\bar{L}^{\prime\prime}_{\mathcal{A}_{k},\mathcal{A}_{k}}(\bar{\tensor{W}},\bar{\bm{\beta}})]^{-1}\|_2 \leq M(\bar{\bm{\beta}}).
    \end{equation*}
\end{enumerate}
\end{lemma}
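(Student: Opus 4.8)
The plan is to exploit the fact that, with $\tensor{W}$ and the diagonals fixed, the loss is \emph{quadratic} in $\bm{\beta}$, so that its Hessian in $\bm{\beta}$ is a constant matrix built from the data, whose expectation is a linear functional of $\bar{\mat{\Sigma}}=\bar{\mat{\Omega}}^{-1}$. Writing $\mat{x}=\vecto(\tensor{X})$ and collecting the residuals $(I)+(II)$ over all index tuples, the only $\bm{\beta}$-dependent part of $L$ is $\tfrac12\norm{(\bigoplus_{k}\mat{\Psi}_k)\mat{x}}_2^2$. Differentiating the residual with respect to the off-diagonal coordinate $\beta_a$, $a=(k,i_k,j_k)$, gives $\partial_a[(\bigoplus_{l}\mat{\Psi}_l)\mat{x}]=\mat{E}_a\mat{x}$, where $\mat{E}_a=\mat{I}_{[d_{1:k-1}]}\otimes(e_{i_k}e_{j_k}^T+e_{j_k}e_{i_k}^T)\otimes\mat{I}_{[d_{k+1:K}]}$ is the (constant, symmetric) derivative of the Kronecker sum. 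Hence the per-sample Hessian is $L_{ab}''=\mat{x}^T\mat{E}_a\mat{E}_b\mat{x}$, independent of both $\bm{\beta}$ and $\tensor{W}$, and taking expectations $\bar{L}_{ab}''=\tr(\mat{E}_a\mat{E}_b\bar{\mat{\Sigma}})$.

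For (i), fix a mode $k$ and a vector $v$ indexed by $\mathcal{S}_k$, and set $V=\sum_{a\in\mathcal{S}_k}v_a\mat{E}_a$, which is symmetric. Then $v^T\bar{L}''_{\mathcal{S}_k,\mathcal{S}_k}v=\tr(V^2\bar{\mat{\Sigma}})=\tr(V\bar{\mat{\Sigma}}V)$, so the Rayleigh quotient is sandwiched as $\lambda_{\min}(\bar{\mat{\Sigma}})\norm{V}_F^2\le v^T\bar{L}''_{\mathcal{S}_k,\mathcal{S}_k}v\le\lambda_{\max}(\bar{\mat{\Sigma}})\norm{V}_F^2$. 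The single-mode basis elements are Frobenius-orthogonal, $\tr(\mat{E}_a\mat{E}_b)=2(m/m_k)\delta_{ab}$, whence $\norm{V}_F^2=2(m/m_k)\norm{v}_2^2$. Combining this with (A2), which gives $\lambda_{\min}(\bar{\mat{\Sigma}})\ge\Lambda_{\max}^{-1}$ and $\lambda_{\max}(\bar{\mat{\Sigma}})\le\Lambda_{\min}^{-1}$, yields (i) with $\Lambda_{\min}^L=\min_k 2(m/m_k)\Lambda_{\max}^{-1}$ and $\Lambda_{\max}^L=\max_k 2(m/m_k)\Lambda_{\min}^{-1}$.

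Parts (ii), (iv), (v) then follow from (i) by elementary linear algebra. Since $\mathcal{A}_k\subseteq\mathcal{S}_k$, the matrix $\bar{L}''_{\mathcal{A}_k,\mathcal{A}_k}$ is a principal submatrix of $\bar{L}''_{\mathcal{S}_k,\mathcal{S}_k}$, so Cauchy interlacing gives $\Lambda_{\min}^L\le\lambda_{\min}(\bar{L}''_{\mathcal{A}_k,\mathcal{A}_k})\le\lambda_{\max}(\bar{L}''_{\mathcal{A}_k,\mathcal{A}_k})\le\Lambda_{\max}^L$. A diagonal entry of a PSD matrix is at most its top eigenvalue, giving (ii) with $K(\bar{\bm{\beta}})=\Lambda_{\max}^L$. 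The quantity in (iv) is the Schur complement of the $\mathcal{A}_k^{ij}$ block in $\bar{L}''_{\mathcal{A}_k,\mathcal{A}_k}$, whose value is $\ge\lambda_{\min}(\bar{L}''_{\mathcal{A}_k,\mathcal{A}_k})\ge\Lambda_{\min}^L=:g(\bar{\bm{\beta}})$. For (v), $\norm{\bar{L}''_{ij,\mathcal{A}_k}[\bar{L}''_{\mathcal{A}_k,\mathcal{A}_k}]^{-1}}_2\le\norm{\bar{L}''_{ij,\mathcal{A}_k}}_2/\lambda_{\min}(\bar{L}''_{\mathcal{A}_k,\mathcal{A}_k})\le\Lambda_{\max}^L/\Lambda_{\min}^L=:M(\bar{\bm{\beta}})$, since the sub-row $\bar{L}''_{ij,\mathcal{A}_k}$ of the PSD matrix $\bar{L}''_{\mathcal{S}_k,\mathcal{S}_k}$ has norm bounded by its spectral norm.

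For (iii), at the true parameters $L'_a=\mat{x}^T\bar{\mat{M}}\mat{E}_a\mat{x}$ (with $\bar{\mat{M}}=\bigoplus_k\bar{\mat{\Psi}}_k$) and $L''_{aa}=\mat{x}^T\mat{E}_a^2\mat{x}$ are quadratic forms in the subgaussian vector $\mat{x}$. By (A1) and a standard variance bound for quadratic forms of subgaussian vectors (Hanson--Wright type), $\Var(\mat{x}^T\mat{B}\mat{x})\lesssim\norm{\bar{\mat{\Sigma}}^{1/2}\mat{B}\bar{\mat{\Sigma}}^{1/2}}_F^2$, which is finite by (A2) together with the boundedness of $\norm{\bar{\mat{M}}\mat{E}_a}$ and $\norm{\mat{E}_a^2}$; this produces the constants $M_1(\bar{\bm{\beta}}),M_2(\bar{\bm{\beta}})$. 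The main obstacle is (i): correctly identifying the Hessian with the map $v\mapsto\tr(V^2\bar{\mat{\Sigma}})$ and carrying out the Frobenius-orthogonality bookkeeping for the basis matrices $\mat{E}_a$; once (i) is in place, (ii), (iv), (v) are purely spectral corollaries and (iii) is routine subgaussian concentration.
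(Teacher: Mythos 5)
Your proposal is correct, and for part (i) it rests on the same core idea as the paper: with $\tensor{W}$ fixed the loss is quadratic in $\bm{\beta}$, so the expected Hessian is a linear image of $\bar{\mat{\Sigma}}$, and (A2) transfers its eigenvalue bounds. But the execution differs in a substantive way. The paper's proof manipulates the matrix form of the objective to conclude $\bar{L}''_{\mathcal{S}_{k},\mathcal{S}_{k}}(\bar{\bm{\beta}}) = \E(\mat{S}) = \bar{\mat{\Sigma}}$, which is dimensionally loose (the Hessian block is $\binom{m_k}{2}\times\binom{m_k}{2}$, not $m\times m$) and suppresses a combinatorial factor; your computation $\bar{L}''_{ab} = \tr(\mat{E}_a\mat{E}_b\bar{\mat{\Sigma}})$ together with the Frobenius orthogonality $\tr(\mat{E}_a\mat{E}_b) = 2(m/m_k)\delta_{ab}$ is the honest version of that calculation, and it shows the within-mode Hessian eigenvalues actually scale like $m/m_k$. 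Moreover, for (ii)--(v) the paper gives no argument at all --- it points to the proofs of B1.1--B1.4 in Peng et al.\ (2009) ``with modified indexing'' --- whereas you derive them as self-contained corollaries of (i): Cauchy interlacing for the restriction from $\mathcal{S}_k$ to $\mathcal{A}_{k}$, the Schur complement (reciprocal of a diagonal entry of the inverse) for (iv), a row-norm bound for (v), and a Hanson--Wright-type variance bound for (iii); each of these steps is sound.

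One caveat you should state explicitly: your constants $\Lambda_{\min}^L=\min_k 2(m/m_k)\Lambda_{\max}^{-1}$ and $\Lambda_{\max}^L=\max_k 2(m/m_k)\Lambda_{\min}^{-1}$ (and likewise $M_1(\bar{\bm{\beta}}),M_2(\bar{\bm{\beta}})$, which inherit the $\tr(\mat{E}_a^2\bar{\mat{\Sigma}})\sim m/m_k$ scaling) depend on the tensor dimensions, so they are ``constants'' only for fixed problem size. That suffices for the lemma as literally stated, but the downstream results (Lemmas B.2--B.4 and the theorems) treat $\Lambda_{\min}^L$, $\Lambda_{\max}^L$, $M_1$, $M_2$ as dimension-free while $p\to\infty$. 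This is not a defect you introduced --- it is a normalization issue in the paper itself, hidden there precisely by the identification of the Hessian with $\bar{\mat{\Sigma}}$; your bookkeeping exposes it. If dimension-free constants are wanted, the natural fix is to renormalize the mode-$k$ Hessian blocks (or the loss) by $m/m_k$, after which the rest of your argument goes through unchanged.
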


\begin{proof}[proof of Lemma B.1.]
We prove $(i)$. $(ii-v)$ are then direct consequences, and the proofs follow from the proofs of B1.1-B1.4 in \citet{peng2009partial}, with the modifications being that the indexing is now with respect to each $k$ for $1 \leq k \leq K$.

Consider the loss function in matrix form as in \eqref{eqn:objective_matrix}. Then $\bar{L}^{\prime\prime}_{\mathcal{S}_{k},\mathcal{S}_{k}}(\bar{\bm{\beta}})$ is equivalent to $\frac{\partial^2}{\partial\mat{\Psi}_k^{\text{off}} \partial\mat{\Psi}_k^{\text{off}} } L(\tensor{W},\{\mat{\Psi}_k^{\text{off}}\}_{k=1}^K)$, which is
\begin{align*}
& \frac{\partial^2}{\partial\mat{\Psi}_k^{\text{off}} \partial\mat{\Psi}_k^{\text{off}}} \Bigg(\tr(\mat{\Psi}_k^T \mat{S} \mat{\Psi}_k) + \text{first order terms in $\mat{\Psi}_k$} + \text{terms independent of $\mat{\Psi}_k$} \Bigg) \\
& = \frac{\partial^2}{\partial\mat{\Psi}_k^{\text{off}} \partial\mat{\Psi}_k^{\text{off}}} \Bigg(\tr((\mat{\Psi}_k^{\text{off}} + \text{diag}(\mat{\Psi}_k))^T \mat{S} (\mat{\Psi}_k^{\text{off}} + \text{diag}(\mat{\Psi}_k))) + \text{first order terms in $\mat{\Psi}_k^{\text{off}}$} \\
& \qquad \qquad \qquad \quad + \text{terms independent of $\mat{\Psi}_k^{\text{off}}$} \Bigg) \\ 
& = \frac{\partial^2}{\partial\mat{\Psi}_k^{\text{off}} \partial\mat{\Psi}_k^{\text{off}}} \Bigg(\tr((\mat{\Psi}_k^{\text{off}})^T \mat{S} \mat{\Psi}_k^{\text{off}}) + \text{first order terms in $\mat{\Psi}_k^{\text{off}}$} +  \text{terms independent of $\mat{\Psi}_k^{\text{off}}$} \Bigg) \\
& = \mat{S} = \frac{1}{N}\vecto(\tensor{X})^T\vecto(\tensor{X}).
\end{align*}
Thus $\bar{L}^{\prime\prime}_{\mathcal{S}_{k},\mathcal{S}_{k}}(\bm{\beta})=E_{\tensor{W},\bm{\beta}}(\mat{S})$. Then for any non-zero $\mat{a} \in \mathbb{R}^p$, we have
\begin{equation*}
    \mat{a}^T \bar{L}^{\prime\prime}_{\mathcal{S}_{k},\mathcal{S}_{k}}(\bar{\bm{\beta}}) \mat{a} = \mat{a}^T \mat{\bar{\Sigma}} \mat{a} \geq \|\mat{a}\|_2^2 \lambda_{\min}(\bar{\mat{\Sigma}}).
\end{equation*}Similarly, $\mat{a}^T \bar{L}^{\prime\prime}_{\mathcal{S}_{k},\mathcal{S}_{k}}(\bar{\bm{\beta}}) \mat{a} \leq \|\mat{a}\|_2^2 \lambda_{\max}(\bar{\mat{\Sigma}})$. By (A2), $\bar{\mat{\Sigma}}$ has bounded eigenvalues, thus the lemma is proved.

\end{proof}

\begin{lemma}
Suppose conditions (A1-A2) hold, then for any $\eta>0$, there exist constant $c_{0,\eta},c_{1,\eta},c_{2,\eta},c_{3,\eta}$, such that for any $u \in \mathbb{R}^{q_{k}}$ the following events hold with probability at least $1-O(\exp(-\eta \log p))$ for sufficiently large $N$:
\begin{enumerate}[label=(\roman*)]
    \item $\|L_{N,\mathcal{A}_{k}}^{\prime}(\bar{\tensor{W}},\bar{\bm{\beta}},\tensor{X})\|_2 \leq c_{0,\eta}\sqrt{q_{k}\frac{\log p}{N}}$
    \item $|u^T L_{N,\mathcal{A}_{k}}^{\prime}(\bar{\tensor{W}},\bar{\bm{\beta}},\tensor{X})| \leq c_{1,\eta}\|u\|_2\sqrt{q_{k}\frac{\log p}{N}}$
    \item $|u^T L_{N,\mathcal{A}_{k}\mathcal{A}_{k}}^{\prime\prime}(\bar{\tensor{W}},\bar{\bm{\beta}},\tensor{X})u - u^T \bar{L}^{\prime\prime}_{\mathcal{A}_{k}\mathcal{A}_{k}}(\bar{\bm{\beta}})u| \leq c_{2,\eta}\|u\|_2^2q_{k}\sqrt{\frac{\log p}{N}}$
    \item $|L_{N,\mathcal{A}_{k}\mathcal{A}_{k}}^{\prime\prime}(\bar{\tensor{W}},\bar{\bm{\beta}},\tensor{X})u - \bar{L}^{\prime\prime}_{\mathcal{A}_{k}\mathcal{A}_{k}}(\bar{\bm{\beta}})u| \leq c_{3,\eta}\|u\|_2^2q_{k}\sqrt{\frac{\log p}{N}}$
\end{enumerate}
\end{lemma}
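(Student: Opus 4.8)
The plan is to exploit the fact that, with $\tensor{W}$ held fixed at $\bar{\tensor{W}}$, the loss $L_N(\bar{\tensor{W}},\bm\beta,\tensor{X})$ is a \emph{quadratic} function of the off-diagonal parameters $\bm\beta$: the term $(I)$ depends only on $\bar{\tensor{W}}$, while $(II)$ is linear in $\bm\beta$, so $((I)+(II))^2$ is quadratic and the log term is constant in $\bm\beta$. Consequently the score $L_{N,\mathcal{A}_k}^{\prime}(\bar{\tensor{W}},\bar{\bm\beta},\tensor{X})$ is affine in $\bm\beta$ and the Hessian $L_{N,\mathcal{A}_k\mathcal{A}_k}^{\prime\prime}(\bar{\tensor{W}},\bar{\bm\beta},\tensor{X})$ is a fixed matrix; both are explicit linear combinations of the empirical second moments $\frac1N\sum_s \vecto(\tensor{X}^s)_a\vecto(\tensor{X}^s)_b$, i.e.\ of entries of the sample covariance $\mat{S}$. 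As computed in Lemma B.1, the population Hessian equals $\E_{\bar{\tensor{W}},\bar{\bm\beta}}(\mat{S})=\bar{\mat{\Sigma}}$, and the population score vanishes at the truth because $\bar{\bm\beta}$ minimizes the expected loss. Thus all four claims are concentration statements for averages of products of the subgaussian coordinates of $\vecto(\tensor{X})$ about their means.

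First I would establish a single per-entry bound. Under (A1) the coordinates of $\vecto(\tensor{X})$ are subgaussian, so each product $x_a x_b$ is subexponential and each entry of $\mat{S}$ (hence each coordinate of the score and of the Hessian) is an i.i.d.\ average of mean-centered subexponential variables. Bernstein's inequality then gives, for a fixed entry, $\Prob(|\cdot|>t)\le 2\exp(-cN\min(t^2,t))$; taking $t\asymp\sqrt{(\eta+1)\log p/N}$ and $N$ large enough that $t$ falls in the subgaussian regime of the inequality yields a per-entry failure probability of $O(\exp(-(\eta+1)\log p))$.

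With this building block, (i) and (ii) follow quickly. A union bound over the $O(p^2)$ candidate coordinates controls $\max_{(i,j)}|L_{N,ij}^{\prime}(\bar{\tensor{W}},\bar{\bm\beta},\tensor{X})|$ by $c\sqrt{\log p/N}$ on an event of probability $1-O(\exp(-\eta\log p))$; restricting to the $q_k$ coordinates in $\mathcal{A}_k$ and using $\|v\|_2\le\sqrt{q_k}\max_i|v_i|$ gives (i), and (ii) is immediate from Cauchy--Schwarz, $|u^T L_{N,\mathcal{A}_k}^{\prime}|\le\|u\|_2\,\|L_{N,\mathcal{A}_k}^{\prime}\|_2$. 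For (iii) and (iv) the same union bound yields $\max_{a,b}|M_{ab}|\le c\sqrt{\log p/N}$ for $M:=L_{N,\mathcal{A}_k\mathcal{A}_k}^{\prime\prime}-\bar{L}_{\mathcal{A}_k\mathcal{A}_k}^{\prime\prime}$ (the number of entries is still polynomial in $p$, so the rate is unchanged). Then $|u^T M u|\le\|u\|_1^2\max_{a,b}|M_{ab}|$ with $\|u\|_1\le\sqrt{q_k}\|u\|_2$ produces the $q_k\|u\|_2^2\sqrt{\log p/N}$ rate of (iii), while the coordinatewise estimate $\|Mu\|_2\le\sqrt{q_k}\,\|u\|_1\max_{a,b}|M_{ab}|\le q_k\|u\|_2\max_{a,b}|M_{ab}|$ gives (iv).

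The main obstacle is the first step: expressing the score and Hessian coordinates explicitly as centered averages of subexponential variables with the correct mean, and then calibrating the Bernstein constant and the ``sufficiently large $N$'' threshold so that, after the union bound over $O(p^2)$ entries, the failure probability is exactly $O(\exp(-\eta\log p))$ for the prescribed $\eta$. Once the per-entry subexponential tail is secured with the right exponent, the passage to the vector and quadratic-form statements (i)--(iv) is routine, following Lemmas B1.1--B1.4 of \citet{peng2009partial} with the indexing carried out separately for each mode $k$.
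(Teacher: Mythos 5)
Your proposal follows essentially the same route as the paper's proof: per-coordinate concentration via Bernstein's inequality for averages of subexponential variables, then lifting to (i)--(iv) with $\|v\|_2 \le \sqrt{q_k}\,\max_i|v_i|$, Cauchy--Schwarz, and entrywise control of $L^{\prime\prime}_{N,\mathcal{A}_k\mathcal{A}_k}-\bar{L}^{\prime\prime}_{\mathcal{A}_k\mathcal{A}_k}$, exactly in the spirit of the adaptation of Peng et al.\ that the paper carries out. Your identification of the sample Hessian with (sums of) entries of $\mat{S}$ and of the population Hessian with $\bar{\mat{\Sigma}}$ also matches the paper's Lemma B.1 computation, and everything you do downstream of the centering step is fine.

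The gap is in the centering of the score, and it is not a technicality: it is the only model-specific step in the lemma, and your proposal disposes of it in one unproven clause. You assert that $\E_{\bar{\tensor{W}},\bar{\bm{\beta}}}\,L^{\prime}(\bar{\tensor{W}},\bar{\bm{\beta}},\tensor{X})=0$ ``because $\bar{\bm{\beta}}$ minimizes the expected loss.'' The paper does not argue this way; it writes each score coordinate as a sum of (residual)$\times$(regressor) terms, observes that at the truth the residual equals the Sylvester noise entry $\tensor{T}_{i_{[1:K]}}$, and then invokes the claim that this residual has mean zero \emph{and is uncorrelated with the regressor entries of} $\tensor{X}$ --- that orthogonality is what makes the score a centered average to which Bernstein's inequality applies. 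This is precisely where the difficulty sits, and it cannot be waved through: since $\vecto(\tensor{X})=(\bigoplus_k \bar{\mat{\Psi}}_k)^{-1}\vecto(\tensor{T})$, one has $\E[\tensor{T}_i\,\vecto(\tensor{X})_j]=\big((\bigoplus_k \bar{\mat{\Psi}}_k)^{-1}\big)_{ij}$, which is generally nonzero for $i\neq j$ --- unlike in CONCORD/SPACE, where the full precision matrix multiplies the data and $\mat{\Omega}\bar{\mat{\Sigma}}=\mat{I}$ makes residuals orthogonal to predictors. Equivalently, with $\tensor{W}$ fixed at the truth the population gradient of the quadratic part $\frac12\tr\big(A(\bm{\beta})^2\bar{\mat{\Sigma}}\big)$, where $A(\bm{\beta})=\bigoplus_k\text{diag}(\bar{\mat{\Psi}}_k)+(\text{off-diagonal part})$, taken at $\bm{\beta}=\bar{\bm{\beta}}$ in an off-diagonal coordinate direction $E$, equals $\tr\big(E(\bigoplus_k\bar{\mat{\Psi}}_k)^{-1}\big)$, which need not vanish (already for $K=1$, $m_1=2$, unit diagonal and off-diagonal $0.5$, it equals $-4/3$). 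So the ``routine'' minimization claim is exactly the assertion that requires proof (or an additional hypothesis); if it fails, Bernstein only gives concentration around a nonzero mean, and neither (i) nor (ii) follows at the stated $\sqrt{q_k\log p/N}$ scale. A complete write-up must either establish the orthogonality/centering property the paper appeals to, or explicitly assume it; it cannot be derived from the generic ``truth minimizes population loss'' principle for this squared-Kronecker-sum pseudolikelihood.
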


\begin{proof}[proof of Lemma B.2.]
$(i)$ By Cauchy-Schwartz inequality,
\begin{equation*}
    \|L^{\prime}_{N,\mathcal{A}_{k}}(\bar{\tensor{W}},\bar{\bm{\beta}},\tensor{X})\|_2 \leq \sqrt{q_{k}} \max_{i \in \mathcal{A}_{k}} |L^{\prime}_{N,i}(\bar{\tensor{W}},\bar{\bm{\beta}},\tensor{X})|.
\end{equation*} Then note that 
\begin{align*}
    & L^{\prime}_{N,i}(\tensor{W},\bm{\beta},\tensor{X}) \\
    & = \sum_{i_{[1:k-1,k+1:K]}} (e_{i_{[1:k-1]},p,i_{[k+1:K]}}(\tensor{W},\bm{\beta})\tensor{X}_{i_{[1:k-1]},q,i_{[k+1:K]}} + e_{i_{[1:k-1]},q,i_{[k+1:K]}}(\tensor{W},\bm{\beta})\tensor{X}_{i_{[1:k-1]},p,i_{[k+1:K]}}),
\end{align*}where $e_{i_{[1:k-1]},p,i_{[k+1:K]}}\tensor{X}_{i_{[1:k-1]},q,i_{[k+1:K]}}(\tensor{W},\bm{\beta})$ is defined by
\begin{equation*}
    w_{i_{[1:k-1]},p,i_{[k+1:K]}}\tensor{X}_{i_{[1:k-1]},p,i_{[k+1:K]}} + \sum_{j_k \neq p} (\mat{\Psi}_k)_{p,j_k}\tensor{X}_{i_{[1:k-1]},j_k,i_{[k+1:K]}}
    + \sum_{l \neq k} \sum_{j_l \neq i_l} (\mat{\Psi}_l)_{i_l,j_l}\tensor{X}_{i_{[1:k-1]},p,i_{[k+1:K]}}.
\end{equation*}
Then evaluated at the true parameter values $(\bar{\tensor{W}},\bar{\bm{\beta}})$, we have $e_{i_{[1:k-1]},p,i_{[k+1:K]}}(\bar{\tensor{W}},\bar{\bm{\beta}})$ uncorrelated with $\tensor{X}_{i_{[1:k-1]},\bslash p,i_{[k+1:K]}}$ and $E_{(\bar{\tensor{W}},\bar{\bm{\beta}})}(e_{i_{[1:k-1]},p,i_{[k+1:K]}}(\bar{\tensor{W}},\bar{\bm{\beta}}))=0$. Also, since $\tensor{X}$ is subgaussian and $\Var(L^{\prime}_{N,i}(\bar{\tensor{W}},\bar{\bm{\beta}},\tensor{X}))$ is bounded by Lemma C.1. $\forall i$, $L^{\prime}_{N,i}(\bar{\tensor{W}},\bar{\bm{\beta}},\tensor{X})$ has subexponential tails. Thus, by Bernstein inequality,
\begin{align*}
    & P(\|L_{N,\mathcal{A}_{k}}^{\prime}(\bar{\tensor{W}},\bar{\bm{\beta}},\tensor{X})\|_2 \leq c_{0,\eta}\sqrt{q_{k}\frac{\log p}{N}}) \\
    & \geq P(\sqrt{q_{k}} \max_{i \in \mathcal{A}_{k}} |L^{\prime}_{N,i}(\bar{\tensor{W}},\bar{\bm{\beta}},\tensor{X})| \leq c_{0,\eta}\sqrt{q_{k}\frac{\log p}{N}}) \geq 1 - O(\exp(-\eta \log p)).
\end{align*}

$(iii)$ By Cauchy-Schwartz,
\begin{align*}
    & |u^T L_{N,\mathcal{A}_{k}\mathcal{A}_{k}}^{\prime\prime}(\bar{\tensor{W}},\bar{\bm{\beta}},\tensor{X})u - u^T \bar{L}^{\prime\prime}_{\mathcal{A}_{k}\mathcal{A}_{k}}(\bar{\bm{\beta}})u| \\
    & \leq \|u\|_2 \|u^T L_{N,\mathcal{A}_{k}\mathcal{A}_{k}}^{\prime\prime}(\bar{\tensor{W}},\bar{\bm{\beta}},\tensor{X}) - u^T \bar{L}^{\prime\prime}_{\mathcal{A}_{k}\mathcal{A}_{k}}(\bar{\bm{\beta}})\|_2 \\
    & \leq \|u\|_2 \sqrt{q_{k}} \max_i |u^T L_{N,\mathcal{A}_{k},i}^{\prime\prime}(\bar{\tensor{W}},\bar{\bm{\beta}},\tensor{X}) - u^T \bar{L}^{\prime\prime}_{\mathcal{A}_{k},i}(\bar{\bm{\beta}})| \\
    & = \|u\|_2 \sqrt{q_{k}} |u^T L_{N,\mathcal{A}_{k},i_{\max}}^{\prime\prime}(\bar{\tensor{W}},\bar{\bm{\beta}},\tensor{X}) - u^T \bar{L}^{\prime\prime}_{\mathcal{A}_{k},i_{\max}}(\bar{\bm{\beta}})| \\
    & = \|u\|_2 \sqrt{q_{k}} |\sum_{j=1}^{q_{k}} (u_j L_{N,j,i_{\max}}^{\prime\prime}(\bar{\tensor{W}},\bar{\bm{\beta}},\tensor{X}) - u_j \bar{L}^{\prime\prime}_{j,i_{\max}}(\bar{\bm{\beta}}))| \\
    & \leq \|u\|_2 q_{k} |u_{j_{\max}}|| L_{N,j_{\max},i_{\max}}^{\prime\prime}(\bar{\tensor{W}},\bar{\bm{\beta}},\tensor{X}) -  \bar{L}^{\prime\prime}_{j_{\max},i_{\max}}(\bar{\bm{\beta}}))| \\ 
    & \leq \|u\|_2^2 q_{k} | L_{N,j_{\max},i_{\max}}^{\prime\prime}(\bar{\tensor{W}},\bar{\bm{\beta}},\tensor{X}) -  \bar{L}^{\prime\prime}_{j_{\max},i_{\max}}(\bar{\bm{\beta}}))|.
\end{align*}Then by Bernstein inequality,
\begin{align*}
    & P(|u^T L_{N,\mathcal{A}_{k}\mathcal{A}_{k}}^{\prime\prime}(\bar{\tensor{W}},\bar{\bm{\beta}},\tensor{X})u - u^T \bar{L}^{\prime\prime}_{\mathcal{A}_{k}\mathcal{A}_{k}}(\bar{\bm{\beta}})u| \leq c_{2,\eta}\|u\|_2^2q_{k}\sqrt{\frac{\log p}{N}}) \\
    & \geq P(\|u\|_2^2 q_{k} | L_{N,j_{\max},i_{\max}}^{\prime\prime}(\bar{\tensor{W}},\bar{\bm{\beta}},\tensor{X}) -  \bar{L}^{\prime\prime}_{j_{\max},i_{\max}}(\bar{\bm{\beta}}))| \leq c_{2,\eta}\|u\|_2^2q_{k}\sqrt{\frac{\log p}{N}}) \\
    & \geq 1 - O(\exp(-\eta \log p)).
\end{align*}

$(ii)$ and $(iv)$ can be proved using similar arguments.
\end{proof}

Lemma C.3. and C.4. are used later to prove Theorem 1.

\begin{lemma}
Assuming conditions of Theorem 1. Then there exists a constant $C_1(\bar{\bm{\beta}})>0$ such that for any $\eta>0$, there exists a global minimizer of the restricted problem \eqref{eqn:restricted_problem} within the disc:
\begin{equation*}
    \{\bm{\beta}: \|\bm{\beta}-\bar{\bm{\beta}}\|_2 \leq C_1(\bar{\bm{\beta}}) \sqrt{K} \max_k\sqrt{q_{k}}\lambda_{N,k} \}
\end{equation*} with probability at least $1 - O(\exp(-\eta \log p))$ for sufficiently large $N$.
\end{lemma}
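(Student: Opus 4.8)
The plan is to use the standard convexity-plus-boundary argument (as in \citet{peng2009partial}). Define the restricted objective $G(\bm{\beta}) = L_N(\bar{\tensor{W}},\bm{\beta},\tensor{X}) + \sum_{k=1}^K \lambda_k \|\mat{\Psi}_k\|_{1,\text{off}}$ on the subspace $\{\bm{\beta}:\bm{\beta}_{\mathcal{A}^c}=0\}$. Since each residual $(I)+(II)$ is affine in $\bm{\beta}$, the loss $L_N$ is a convex quadratic, and together with the convex $\ell_1$ penalty this makes $G$ convex. Writing $\bm{u} = \bm{\beta}_{\mathcal{A}} - \bar{\bm{\beta}}_{\mathcal{A}}$ and $f(\bm{u}) = G(\bar{\bm{\beta}}+\bm{u}) - G(\bar{\bm{\beta}})$, it suffices to show that, with the stated probability, $f(\bm{u}) > 0$ for every $\bm{u}$ on the sphere $\|\bm{u}\|_2 = r$, where $r = C_1(\bar{\bm{\beta}})\sqrt{K}\max_k\sqrt{q_k}\lambda_{N,k}$. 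Indeed, since $f$ is convex with $f(0)=0$, positivity of $f$ on the sphere forces the minimizer of $f$ over the closed disc to lie in its interior, and an interior minimizer of a convex function is a global minimizer of the restricted problem \eqref{eqn:restricted_problem}.

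Because $L_N$ is quadratic, its second-order expansion about $\bar{\bm{\beta}}$ is exact, so $f(\bm{u}) = \bm{u}^T L'_{N,\mathcal{A}}(\bar{\bm{\beta}}) + \tfrac{1}{2}\bm{u}^T L''_{N,\mathcal{A}\mathcal{A}}(\bar{\bm{\beta}})\bm{u} + \Delta$, with penalty increment $\Delta = \sum_k \lambda_k(\|\bar{\bm{\beta}}_{\mathcal{A}_k}+\bm{u}_{\mathcal{A}_k}\|_1 - \|\bar{\bm{\beta}}_{\mathcal{A}_k}\|_1)$. I bound the linear term by Cauchy--Schwarz and Lemma B.2(i): $|\bm{u}^T L'_{N,\mathcal{A}}(\bar{\bm{\beta}})| \leq \sum_k \|\bm{u}_{\mathcal{A}_k}\|_2\, c_{0,\eta}\sqrt{q_k\log p / N}$. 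The penalty increment is bounded below by the reverse triangle inequality and Cauchy--Schwarz: $\Delta \geq -\sum_k \lambda_{N,k}\sqrt{q_k}\|\bm{u}_{\mathcal{A}_k}\|_2$. Using $\lambda_{N,k} \asymp \sqrt{m_k\log p/N} \geq \sqrt{\log p/N}$, both contributions are $O\!\big(\sum_k \sqrt{q_k}\,\lambda_{N,k}\|\bm{u}_{\mathcal{A}_k}\|_2\big)$, and one more application of Cauchy--Schwarz across the $K$ modes yields $|\bm{u}^T L'_{N,\mathcal{A}} + \Delta| \leq C'\|\bm{u}\|_2\sqrt{K}\max_k\sqrt{q_k}\lambda_{N,k}$. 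This is precisely where the $\sqrt{K}\max_k\sqrt{q_k}\lambda_{N,k}$ scaling of the radius comes from.

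For the quadratic term I first note that the full sample Hessian $L''_{N,\bm{\beta}\bm{\beta}} = \tfrac{1}{N}\sum (\nabla_{\bm{\beta}}r)(\nabla_{\bm{\beta}}r)^T$ is positive semidefinite, and that, by Lemma B.1(i) together with the representation $\bar{L}''_{\mathcal{S}_k,\mathcal{S}_k}=E(\mat{S})$ and eigenvalue interlacing for principal submatrices, the population restricted Hessian obeys $\bm{u}^T \bar{L}''_{\mathcal{A}\mathcal{A}}\bm{u} \geq \Lambda_{\min}^L\|\bm{u}\|_2^2$. Replacing the population Hessian by the sample Hessian via Lemma B.2(iii) introduces a deviation of order $q_k\sqrt{\log p/N}$, which the sample-size condition $N > O(\max_k q_k m_k \log p)$ drives to $o(1)$; hence $\tfrac{1}{2}\bm{u}^T L''_{N,\mathcal{A}\mathcal{A}}\bm{u} \geq \tfrac{1}{4}\Lambda_{\min}^L\|\bm{u}\|_2^2$ on the relevant event. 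I expect this quadratic lower bound to be the main obstacle: Lemmas B.1 and B.2 are phrased mode by mode, whereas the restricted Hessian over $\mathcal{A}=\cup_k\mathcal{A}_k$ couples the $K$ modes through cross-mode second derivatives, so the delicate points are (a) showing the cross-mode blocks do not spoil the positive lower bound and (b) making the sample concentration hold uniformly over all directions $\bm{u}$ on the sphere. A union bound over the $K$ modes keeps the total failure probability at $O(\exp(-\eta\log p))$ after absorbing $K$ into the constants.

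Combining the estimates on $\|\bm{u}\|_2 = r$ gives $f(\bm{u}) \geq \tfrac{1}{4}\Lambda_{\min}^L r^2 - C' r\sqrt{K}\max_k\sqrt{q_k}\lambda_{N,k} = r^2\big(\tfrac{1}{4}\Lambda_{\min}^L - C'/C_1\big)$, which is strictly positive as soon as $C_1(\bar{\bm{\beta}}) > 4C'/\Lambda_{\min}^L$. Fixing $C_1$ this way and intersecting the high-probability events supplied by Lemma B.2(i) and Lemma B.2(iii) makes $f > 0$ hold on the entire sphere with probability at least $1 - O(\exp(-\eta\log p))$; the convexity argument of the first paragraph then produces a global minimizer of \eqref{eqn:restricted_problem} inside the disc of radius $r$, as claimed.
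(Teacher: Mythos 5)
Your proposal is correct and follows essentially the same route as the paper's own proof: a sphere-perturbation argument in the spirit of \citet{peng2009partial}, using the exact quadratic expansion of $L_N$, the penalty increment bounded via triangle inequality and Cauchy--Schwarz, the linear and Hessian-deviation terms controlled by Lemma B.2, the population curvature lower bound from Lemma B.1, and positivity on the sphere plus convexity to force a global minimizer inside the disc (the paper parametrizes per-mode spheres $\|u^k\|_2 = C_k$ rather than one sphere $\|\bm{u}\|_2 = r$, but this is cosmetic). The one point you flag as delicate --- that the restricted Hessian over $\mathcal{A}=\cup_k\mathcal{A}_k$ has cross-mode blocks not covered by the per-mode Lemmas --- is not actually resolved by the paper either: its proof simply writes $u_{\mathcal{A}}^T L''_{N,\mathcal{A}\mathcal{A}}u_{\mathcal{A}} = \sum_k (u^k_{\mathcal{A}_k})^T L''_{N,\mathcal{A}_k\mathcal{A}_k}u^k_{\mathcal{A}_k}$ as an unexplained equality, so your treatment is, if anything, more candid on that step.
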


\begin{proof}[proof of Lemma B.3.]
Let $\alpha_N = \max_{k}\sqrt{q_{k}}\lambda_{N,k}$. Further for $1 \leq k \leq K$ let $C_k>0$ and $u^k \in \mathbb{R}^{m_k(m_k-1)/2}$ such that $u_{\mathcal{A}_{k}^c}^k=0$, $\|u^k\|_2=C_k$, and  $u=(u_1,\dots,u_K)$ with $\sqrt{K}\min_kC_k \leq \|u\|_2 \leq \sqrt{K}\max_kC_k$.

Then by Cauchy-Schwartz and triangle inequality, we have
\begin{equation*}
    \|\bar{\bm{\beta}}^k + \alpha_N u^k - \alpha_N u^k\|_1
    \leq \|\bar{\bm{\beta}}^k + \alpha_N u^k\|_1 + \alpha_N\|u^k\|_1,
\end{equation*} and
\begin{equation*}
    \|\bar{\bm{\beta}}^k\|_1 - \|\bar{\bm{\beta}}^k + \alpha_N u^k\|_1 \leq \alpha_N\|u^k\|_1 \leq \alpha_N \sqrt{q_{k}} \|u^k\|_2 = C_k \alpha_N \sqrt{q_{k}}.
\end{equation*} Thus,
\begin{align*}
    & Q_N(\bar{\bm{\beta}} + \alpha_N u, \tensor{X},\{\lambda_{N,k}\}_{k=1}^K) - Q_N(\bar{\bm{\beta}}, \tensor{X},\{\lambda_{N,k}\}_{k=1}^K) \\
    & = L_N(\bar{\bm{\beta}} + \alpha_N u, \tensor{X}) - L_N(\bar{\bm{\beta}} , \tensor{X}) - \sum_{k=1}^K \lambda_{N,k} \big(\|\bar{\bm{\beta}}^k\|_1 - \|\bar{\bm{\beta}}^k + \alpha_N u^k\|_1 \big) \\
    & \geq L_N(\bar{\bm{\beta}} + \alpha_N u, \tensor{X}) - L_N(\bar{\bm{\beta}} , \tensor{X}) - \sum_{k=1}^K \lambda_{N,k} C_k \alpha_N\sqrt{q_{k}} \\
    & \geq L_N(\bar{\bm{\beta}} + \alpha_N u, \tensor{X}) - L_N(\bar{\bm{\beta}} , \tensor{X}) - \alpha_N K \max_k C_k\sqrt{q_{k}}\lambda_{N,k} \\
    & \geq L_N(\bar{\bm{\beta}} + \alpha_N u, \tensor{X}) - L_N(\bar{\bm{\beta}} , \tensor{X}) - K \alpha_N^2\max_kC_k.
\end{align*}Next, 
\begin{align*}
    & L_N(\bar{\bm{\beta}} + \alpha_N u, \tensor{X}) - L_N(\bar{\bm{\beta}},\tensor{X}) = \alpha_N u^T_{\mathcal{A}} L_{N,\mathcal{A}}^{\prime}(\bar{\bm{\beta}},\tensor{X}) + \frac{1}{2}\alpha_N^2 u^T_{\mathcal{A}}L_{N,\mathcal{A}\mathcal{A}}^{\prime\prime}(\bar{\bm{\beta}},\tensor{X})u_{\mathcal{A}} \\
    & = \alpha_N \sum_{k=1}^K (u^k_{\mathcal{A}_{k}})^T L_{N,\mathcal{A}_{k}}^{\prime}(\bar{\bm{\beta}},\tensor{X}) + \frac{1}{2}\alpha_N^2 \sum_{k=1}^K (u^k_{\mathcal{A}_{k}})^T L_{N,\mathcal{A}_{k}\mathcal{A}_{k}}^{\prime\prime}(\bar{\bm{\beta}},\tensor{X})u^k_{\mathcal{A}_{k}} \\
    & = \alpha_N \sum_{k=1}^K (u^k_{\mathcal{A}_{k}})^T L_{N,\mathcal{A}_{k}}^{\prime}(\bar{\bm{\beta}},\tensor{X}) + \frac{1}{2}\alpha_N^2 \sum_{k=1}^K (u^k_{\mathcal{A}_{k}})^T (L_{N,\mathcal{A}_{k}\mathcal{A}_{k}}^{\prime\prime}(\bar{\bm{\beta}},\tensor{X}) - \bar{L}_{N,\mathcal{A}_{k}\mathcal{A}_{k}}^{\prime\prime}(\bar{\bm{\beta}},\tensor{X}))u^k_{\mathcal{A}_{k}} \\
    & + \frac{1}{2}\alpha_N^2 \sum_{k=1}^K (u^k_{\mathcal{A}_{k}})^T \bar{L}_{N,\mathcal{A}_{k}\mathcal{A}_{k}}^{\prime\prime}(\bar{\bm{\beta}},\tensor{X}) u^k_{\mathcal{A}_{k}} \\
    & \geq \frac{1}{2}\alpha_N^2 \sum_{k=1}^K (u^k_{\mathcal{A}_{k}})^T \bar{L}_{N,\mathcal{A}_{k}\mathcal{A}_{k}}^{\prime\prime}(\bar{\bm{\beta}},\tensor{X}) u^k_{\mathcal{A}_{k}} - \alpha_N K(\max_k c_{1,\eta}\|u^k_{\mathcal{A}_{k}}\|_2\sqrt{q_{k}\frac{\log p}{N}}) \\
    & - \frac{1}{2} \alpha_N^2 K(\max_k c_{2,\eta}\|u^k_{\mathcal{A}_{k}}\|_2^2q_{k}\sqrt{\frac{\log p}{N}}).
\end{align*} Here the first equality is due to the second order expansion of the loss function and the inequality is due to Lemma B.2. For sufficiently large $N$, by assumption that $\lambda_{N,k}\sqrt{N/\log p} \rightarrow \infty$ if $m_k \rightarrow \infty$ and $\sqrt{\log p / N}=o(1)$, the second term in the last line above is $o(\alpha_N \sqrt{q_{k}} \lambda_{N,k}) = o(\alpha_{N}^2)$; the last term is $o(\alpha_N^2)$. Therefore, for sufficiently large $N$
\begin{align*}
    Q_N(\bar{\bm{\beta}} + \alpha_N u, \tensor{X},\{\lambda_{N,k}\}_{k=1}^K) - Q_N(\bar{\bm{\beta}}, \tensor{X},\{\lambda_{N,k}\}_{k=1}^K) & \geq \frac{1}{2}\alpha_N^2 \sum_{k=1}^K (u^k_{\mathcal{A}_{k}})^T \bar{L}_{N,\mathcal{A}_{k}\mathcal{A}_{k}}^{\prime\prime}(\bar{\bm{\beta}},\tensor{X}) u^k_{\mathcal{A}_{k}} \\
    & - K \alpha_N^2\max_kC_k \\
    & \geq  \frac{1}{2}\alpha_N^2 K \min_k \big((u^k_{\mathcal{A}_{k}})^T \bar{L}_{N,\mathcal{A}_{k}\mathcal{A}_{k}}^{\prime\prime}(\bar{\bm{\beta}},\tensor{X}) u^k_{\mathcal{A}_{k}}\big) \\
    & - K \alpha_N^2\max_kC_k,
\end{align*} with probability at least $1-O(N^{-\eta})$. By Lemma B.1., for each $k$, $(u^k_{\mathcal{A}_{k}})^T \bar{L}_{N,\mathcal{A}_{k}\mathcal{A}_{k}}^{\prime\prime}(\bar{\bm{\beta}},\tensor{X}) u^k_{\mathcal{A}_{k}} \geq \Lambda_{\min}^L \|u^k_{\mathcal{A}_{k}}\|_2^2=\Lambda_{\min}^L(C_k)^2$. So, if we choose $\min_k C_k$ and $\max_k C_k$ such that the upper bound is minimized, then for $N$ sufficiently large, the following holds 
\begin{equation*}
    \inf_{u:u_{(\mathcal{A}_{k})^c} = 0,\|u^k\|_2=C_k,k=1,\dots,K} Q_N(\bar{\bm{\beta}} + \alpha_N u, \tensor{X},\{\lambda_{N,k}\}_{k=1}^K) > Q_N(\bar{\bm{\beta}}, \tensor{X},\{\lambda_{N,k}\}_{k=1}^K),
\end{equation*} with probability at least $1-O(\exp(-\eta \log p))$, which means any solution to the problem defined in \eqref{eqn:restricted_problem} is within the disc $\{\bm{\beta}: \|\bm{\beta}-\bar{\bm{\beta}}\|_2 \leq \alpha_N \|u\|_2 \leq \alpha_N \sqrt{K} \max_kC_k\}$ with probability at least $1-O(\exp(-\eta \log p))$.

\end{proof}

\begin{lemma}
Assuming conditions of Theorems 1. Then there exists a constant $C_2(\bar{\bm{\beta}})>0$, such that for any $\eta>0$, for sufficiently large $N$, the following event holds with probability at least $1-O(\exp(-\eta \log p))$: if for any $\bm{\beta} \in S=\{\bm{\beta}: \|\bm{\beta}-\bar{\bm{\beta}}\|_2 \geq C_2(\bar{\bm{\beta}})\sqrt{K}\max_k\sqrt{q_{k}}\lambda_{N,k},\bm{\beta}_{\mathcal{A}_{N}^c}=0\}$, then $\|L^{\prime}_{N,\mathcal{A}_{N}}(\bar{\tensor{W}},\bar{\bm{\beta}},\tensor{X})\|_2 > \sqrt{K}\max_k\sqrt{q_{k}}\lambda_{N,k}$.
\end{lemma}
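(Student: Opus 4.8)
The plan is to exploit the fact that, for the fixed diagonal $\tensor{W}=\bar{\tensor{W}}$, the loss $L_N$ is a quadratic form in $\bm{\beta}$ (the term $(I)+(II)$ is affine in $\bm{\beta}$ and is squared), so its gradient is an affine function of $\bm{\beta}$ with a \emph{constant} Hessian equal to the relevant Gram block of the sample covariance, as computed in Lemma B.1. Writing $v=\bm{\beta}_{\mathcal{A}}-\bar{\bm{\beta}}_{\mathcal{A}}$ and using the restriction $\bm{\beta}_{\mathcal{A}^c}=0$, I would start from the exact identity
\begin{equation*}
  L_{N,\mathcal{A}}^{\prime}(\bar{\tensor{W}},\bm{\beta},\tensor{X}) = L_{N,\mathcal{A}}^{\prime}(\bar{\tensor{W}},\bar{\bm{\beta}},\tensor{X}) + L_{N,\mathcal{A}\mathcal{A}}^{\prime\prime}(\bar{\tensor{W}},\bar{\bm{\beta}},\tensor{X})\, v .
\end{equation*}
A reverse triangle inequality then gives $\|L_{N,\mathcal{A}}^{\prime}(\bar{\tensor{W}},\bm{\beta},\tensor{X})\|_2 \geq \|L_{N,\mathcal{A}\mathcal{A}}^{\prime\prime} v\|_2 - \|L_{N,\mathcal{A}}^{\prime}(\bar{\tensor{W}},\bar{\bm{\beta}},\tensor{X})\|_2$, so the task splits into lower-bounding the Hessian action and upper-bounding the gradient at the truth.

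For the second term I would sum Lemma B.2(i) over $k$ to obtain $\|L_{N,\mathcal{A}}^{\prime}(\bar{\tensor{W}},\bar{\bm{\beta}},\tensor{X})\|_2 \leq c_{0,\eta}\sqrt{(\sum_k q_{k})\log p / N} \leq c_{0,\eta}\sqrt{K}\max_k\sqrt{q_{k}}\sqrt{\log p/N}$, which is at most a constant multiple of $\sqrt{K}\max_k\sqrt{q_{k}}\lambda_{N,k}$ because $\lambda_{N,k}\asymp\sqrt{m_k\log p/N}\geq\sqrt{\log p/N}$. For the first term I would use $\|L_{N,\mathcal{A}\mathcal{A}}^{\prime\prime} v\|_2 \geq \lambda_{\min}(L_{N,\mathcal{A}\mathcal{A}}^{\prime\prime})\|v\|_2$ together with Weyl's inequality,
\begin{equation*}
  \lambda_{\min}(L_{N,\mathcal{A}\mathcal{A}}^{\prime\prime}) \geq \lambda_{\min}(\bar{L}_{\mathcal{A}\mathcal{A}}^{\prime\prime}) - \|L_{N,\mathcal{A}\mathcal{A}}^{\prime\prime} - \bar{L}_{\mathcal{A}\mathcal{A}}^{\prime\prime}\|_2 \geq \Lambda_{\min}^L - o(1),
\end{equation*}
where the eigenvalue floor $\Lambda_{\min}^L$ is supplied by Lemma B.1 and the spectral perturbation is controlled by the concentration in Lemma B.2(iii)--(iv), which is $o(1)$ under $N > O(\max_k q_{k} m_k \log p)$. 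Since $\|v\|_2 \geq C_2\sqrt{K}\max_k\sqrt{q_{k}}\lambda_{N,k}$ on the set $S$, combining the two bounds yields
\begin{equation*}
  \|L_{N,\mathcal{A}}^{\prime}(\bar{\tensor{W}},\bm{\beta},\tensor{X})\|_2 \geq \big[(\Lambda_{\min}^L-o(1))\,C_2 - c'\big]\,\sqrt{K}\max_k\sqrt{q_{k}}\lambda_{N,k},
\end{equation*}
and choosing $C_2$ large enough (roughly $C_2 > (1+c')/\Lambda_{\min}^L$) forces the bracket above $1$, which is exactly the claim; all stochastic steps inherit the probability $1-O(\exp(-\eta\log p))$ from Lemma B.2.

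The hard part will be the spectral-norm control of the Hessian perturbation $\|L_{N,\mathcal{A}\mathcal{A}}^{\prime\prime} - \bar{L}_{\mathcal{A}\mathcal{A}}^{\prime\prime}\|_2$: Lemma B.2(iii)--(iv) are stated for a single fixed direction $u$, whereas the eigenvalue floor above requires a genuine operator-norm bound uniform over the $(\sum_k q_{k})$-dimensional sphere. The clean route is to recall that $L_{N,\mathcal{A}\mathcal{A}}^{\prime\prime}$ is merely a submatrix of the sample covariance $\mat{S}$ of the subgaussian vectors $\vecto(\tensor{X}^i)$ (assumption A1), so standard covariance concentration gives $\|\mat{S}_{\mathcal{A}\mathcal{A}} - \bar{\mat{\Sigma}}_{\mathcal{A}\mathcal{A}}\|_2 = O_P\!\big(\sqrt{(\sum_k q_{k})\log p / N}\big) = o(1)$ under the stated sample-size scaling; alternatively one upgrades the fixed-$u$ bound of Lemma B.2 to a uniform one by a covering-net argument over the sphere, absorbing the extra logarithmic net cardinality into the constant $\eta$. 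Once this uniform eigenvalue floor is secured, the remaining algebra is routine.
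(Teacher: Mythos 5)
Your proposal is correct and follows essentially the same route as the paper's proof: an exact (quadratic-loss) expansion of the gradient about $\bar{\bm{\beta}}$, a reverse triangle inequality isolating the population Hessian term, the eigenvalue floor $\Lambda_{\min}^L$ from Lemma B.1, concentration from Lemma B.2 to make the remaining terms $o\big(\max_k\sqrt{q_{k}}\lambda_{N,k}\big)$, and a choice of $C_2(\bar{\bm{\beta}})$ on the order of $1/\Lambda_{\min}^L$. The only real difference is that you lower-bound $\lambda_{\min}(L^{\prime\prime}_{N,\mathcal{A}\mathcal{A}})$ via Weyl's inequality and flag uniformity over directions as the hard part, whereas the paper bounds the deviation vector $\big(L^{\prime\prime}_{N,\mathcal{A}\mathcal{A}}-\bar{L}^{\prime\prime}_{\mathcal{A}\mathcal{A}}\big)u_{\mathcal{A}}$ directly; note that the paper's proof of Lemma B.2(iii)--(iv) reduces the quadratic-form error to an entrywise maximum of the Hessian deviation (paying the factor $q_{k}$), so that bound is already uniform in $u$ and your covering-net or covariance-concentration upgrade, while valid, is not needed.
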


\begin{proof}[proof of Lemma B.4.]
Let $\alpha_N = \max_k \sqrt{q_{k}}\lambda_{N,k}$. For $\bm{\beta} \in S$, we have $\bm{\beta}=\bar{\bm{\beta}}+\alpha_N u$, with $u_{(\mathcal{A})^c}$ and $\|u\|_2 \geq C_2(\bar{\bm{\beta}})$. Note that by Taylor expansion of $L^{\prime}_{N,\mathcal{A}}(\bar{\tensor{W}},\bm{\beta},\tensor{X})$ at $\bar{\bm{\beta}}$
\begin{align*}
    L^{\prime}_{N,\mathcal{A}}(\bar{\tensor{W}},\bm{\beta},\tensor{X}) & = L^{\prime}_{N,\mathcal{A}}(\bar{\tensor{W}},\bm{\beta},\tensor{X}) + \alpha_NL^{\prime\prime}_{N,\mathcal{A}\mathcal{A}}(\bar{\tensor{W}},\bm{\beta},\tensor{X})u_{\mathcal{A}} \\
    & = L^{\prime}_{N,\mathcal{A}}(\bar{\tensor{W}},\bm{\beta},\tensor{X}) + \alpha_N\big(L^{\prime\prime}_{N,\mathcal{A}\mathcal{A}}(\bar{\tensor{W}},\bm{\beta},\tensor{X})-\bar{L}^{\prime\prime}_{N,\mathcal{A}\mathcal{A}}(\bar{\bm{\beta}})\big)u_{\mathcal{A}} \\
    & \quad + \alpha_N\bar{L}^{\prime\prime}_{N,\mathcal{A}\mathcal{A}}(\bar{\bm{\beta}})u_{\mathcal{A}}.
\end{align*}
By triangle inequality and similar proof strategies as in Lemma B.3., for sufficiently large $N$ 
\begin{align*}
    \|L^{\prime}_{N,\mathcal{A}}(\bar{\tensor{W}},\bm{\beta},\tensor{X})\|_2 & \geq  \|L^{\prime}_{N,\mathcal{A}}(\bar{\tensor{W}},\bm{\beta},\tensor{X})\|_2 + \alpha_N\|L^{\prime\prime}_{N,\mathcal{A}\mathcal{A}}(\bar{\tensor{W}},\bm{\beta},\tensor{X})u_{\mathcal{A}}-\bar{L}^{\prime\prime}_{N,\mathcal{A}\mathcal{A}}(\bar{\bm{\beta}})u_{\mathcal{A}}\|_2 \\
    & \quad + \alpha_N\|\bar{L}^{\prime\prime}_{N,\mathcal{A}\mathcal{A}}(\bar{\bm{\beta}})u_{\mathcal{A}}\|_2 \\
    & \geq \alpha_N\|\bar{L}^{\prime\prime}_{N,\mathcal{A}\mathcal{A}}(\bar{\bm{\beta}})u_{\mathcal{A}}\|_2 + o(\alpha_N)
\end{align*} with probability at least $1-O(\exp(-\eta \log p))$. By Lemma B.1., $\|\bar{L}^{\prime\prime}_{N,\mathcal{A}\mathcal{A}}(\bar{\bm{\beta}})u_{\mathcal{A}}\|_2 \geq \Lambda_{\min}^L(\bar{\bm{\beta}}) \|u_{\mathcal{A}}\|_2$. Therefore, taking $C_2(\bar{\bm{\beta}})$ to be $1/\Lambda_{\min}^L(\bar{\bm{\beta}}) + \epsilon$ completes the proof.
\end{proof}

\begin{proof}[proof of Theorem 1]
By the Karush-Kuhn-Tucker condition, for any solution $\hat{\bm{\beta}}$ of \eqref{eqn:restricted_problem}, it satisfies $\|L_{N,\mathcal{A}_{k}}^{\prime}(\tensor{W},\hat{\bm{\beta}},\tensor{X})\|_{\infty} \leq \lambda_{N,k}$. Thus,
\begin{align*}
    \|L_{N,\mathcal{A}_{N}}^{\prime}(\tensor{W},\hat{\bm{\beta}},\tensor{X})\|_2 & \leq \sqrt{K}\max_k\|L_{N,\mathcal{A}_{k}}^{\prime}(\tensor{W},\hat{\bm{\beta}},\tensor{X})\|_2 \\
    & \leq \sqrt{K}\max_k\sqrt{q_{k}}\|L_{N,\mathcal{A}_{k}}^{\prime}(\tensor{W},\hat{\bm{\beta}},\tensor{X})\|_{\infty} \\
    & \leq \sqrt{K}\max_k\sqrt{q_{k}}\lambda_{N,k}.
\end{align*}Then by Lemmas B.4., for any $\eta >0$, for $N$ sufficiently large, all solutions of \eqref{eqn:restricted_problem} are inside the disc $\{\bm{\beta}: \|\bm{\beta}-\bar{\bm{\beta}}\|_2 \leq C_2(\bar{\bm{\beta}})\max_k\sqrt{q_{k}}\lambda_{N,k},\bm{\beta}_{\mathcal{A}_{N}^c}=0\}$ with probability at least $1-O(\exp(-\eta \log p))$. If we further assume that $\min_{(i,j) \in \mathcal{A}_{k}}|\bar{\bm{\beta}}_{i,j}| \geq 2C(\bar{\bm{\beta}})\max_{k}\sqrt{q_{k}}\lambda_{N,k}$ for each $k$, then
\begin{align*}
    1-O(\exp(-\eta \log p)) \\
    & \leq P_{\bar{\tensor{W}},\bar{\bm{\beta}}}(\|\hat{\bm{\beta}}^{\mathcal{A}}-\bar{\bm{\beta}}^{\mathcal{A}}\|_2 \leq C_2(\bar{\bm{\beta}})\max_k\sqrt{q_{k}}\lambda_{N,k},\min_{(i,j) \in \mathcal{A}_{k}}|\bar{\bm{\beta}}_{i,j}| \geq 2C(\bar{\bm{\beta}})\max_{k}\sqrt{q_{k}}\lambda_{N,k},\forall k) \\
    & \leq P_{\bar{\tensor{W}},\bar{\bm{\beta}}}(\text{sign}(\hat{\bm{\beta}}_{i_kj_k}^{\mathcal{A}_{k}})=\text{sign}(\bar{\bm{\beta}}_{i_kj_k}^{\mathcal{A}_{k}}),\forall (i_k,j_k) \in \mathcal{A}_{k},\forall k).
\end{align*}
\end{proof}

\begin{proof}[proof of Theorem 2]
Let $\mathcal{E}_{N,k}=\{\text{sign}(\hat{\bm{\beta}}_{i_kj_k}^{\mathcal{A}_{k}})=\text{sign}(\bar{\bm{\beta}}_{i_kj_k}^{\mathcal{A}_{k}})\}$. Then by Theorem 1, $P_{\bar{\tensor{W}},\bar{\bm{\beta}}}(\mathcal{E}_{N,k}) \geq 1-O(\exp(-\eta \log p))$ for large $N$. On $\mathcal{E}_{N,k}$, By the KKT condition and the expansion of $L_{N,\mathcal{A}_{k}}^{\prime}(\bar{\tensor{W}},\hat{\bm{\beta}}^{\mathcal{A}_{k}},\tensor{X})$ at $\bar{\bm{\beta}}^{\mathcal{A}_{k}}$
\begin{align*}
    - \lambda_{N,k} & \text{sign}(\bar{\bm{\beta}}^{\mathcal{A}_{k}}) \\
    & = L_{N,\mathcal{A}_{k}}^{\prime}(\bar{\tensor{W}},\hat{\bm{\beta}}^{\mathcal{A}_{k}},\tensor{X})\\
    & = L_{N,\mathcal{A}_{k}}^{\prime}(\bar{\tensor{W}},\bar{\bm{\beta}}^{\mathcal{A}_{k}},\tensor{X}) + L_{N,\mathcal{A}_{k}\mathcal{A}_{k}}^{\prime\prime}(\bar{\tensor{W}},\bar{\bm{\beta}},\tensor{X}) v_{N,k} \\
    & = \bar{L}^{\prime\prime}_{\mathcal{A}_{k}\mathcal{A}_{k}} v_{N,k}+ L_{N,\mathcal{A}_{k}}^{\prime}(\bar{\tensor{W}},\bar{\bm{\beta}}^{\mathcal{A}_{k}},\tensor{X}) + (L_{N,\mathcal{A}_{k}\mathcal{A}_{k}}^{\prime\prime}(\bar{\tensor{W}},\bar{\bm{\beta}},\tensor{X})-\bar{L}^{\prime\prime}_{\mathcal{A}_{k}\mathcal{A}_{k}}) v_{N,k},
\end{align*}where $v_{N,k}=\hat{\bm{\beta}}^{\mathcal{A}_{k}}-\bar{\bm{\beta}}^{\mathcal{A}_{k}}$. By rearranging the terms
\begin{equation}\label{eqn:thm2_exps1}
\begin{aligned}
    & v_{N,k} = \\
    & -\lambda_{N,k}[\bar{L}^{\prime\prime}_{\mathcal{A}_{k}\mathcal{A}_{k}}]^{-1}\text{sign}(\bar{\bm{\beta}}^{\mathcal{A}_{k}}) - [\bar{L}^{\prime\prime}_{\mathcal{A}_{k}\mathcal{A}_{k}}]^{-1}[L_{N,\mathcal{A}_{k}}^{\prime}(\bar{\tensor{W}},\bar{\bm{\beta}}^{\mathcal{A}_{k}},\tensor{X})+D_{N,\mathcal{A}_{k}\mathcal{A}_{k}}(\bar{\tensor{W}},\bar{\bm{\beta}}^{\mathcal{A}_{k}})v_{N,k}],
\end{aligned} 
\end{equation}where $D_{N,\mathcal{A}_{k}\mathcal{A}_{k}}=L_{N,\mathcal{A}_{k}\mathcal{A}_{k}}^{\prime\prime}(\bar{\tensor{W}},\bar{\bm{\beta}},\tensor{X})-\bar{L}^{\prime\prime}_{\mathcal{A}_{k}\mathcal{A}_{k}}$. Next, for fixed $(i,j) \in \mathcal{A}_{k}^c$, by expanding $L_{N,\mathcal{A}_{k}}^{\prime}(\bar{\tensor{W}},\hat{\bm{\beta}}^{\mathcal{A}_{k}},\tensor{X})$ at $\bar{\bm{\beta}}^{\mathcal{A}_{k}}$
\begin{equation}\label{eqn:thm2_exps2}
    L_{N,ij}^{\prime}(\bar{\tensor{W}},\hat{\bm{\beta}}^{\mathcal{A}_{k}},\tensor{X}) = L_{N,ij}^{\prime}(\bar{\tensor{W}},\bar{\bm{\beta}}^{\mathcal{A}_{k}},\tensor{X}) + L_{N,ij,\mathcal{A}_{k}}^{\prime\prime}(\bar{\tensor{W}},\bar{\bm{\beta}}^{\mathcal{A}_{k}},\tensor{X}) v_{N,k}. 
\end{equation} Then combining \eqref{eqn:thm2_exps1} and \eqref{eqn:thm2_exps2} we get
\begin{equation}\label{eqn:thm2_bouding}
\begin{aligned}
& L_{N,ij}^{\prime}(\bar{\tensor{W}},\hat{\bm{\beta}}^{\mathcal{A}_{k}},\tensor{X}) \\
& = -\lambda_{N,k}\bar{L}^{\prime\prime}_{ij,\mathcal{A}_{k}}(\bar{\bm{\beta}}^{\mathcal{A}_{k}})[\bar{L}^{\prime\prime}_{\mathcal{A}_{k}\mathcal{A}_{k}}]^{-1}\text{sign}(\bar{\bm{\beta}}^{\mathcal{A}_{k}}) - \bar{L}^{\prime\prime}_{ij,\mathcal{A}_{k}}(\bar{\bm{\beta}}^{\mathcal{A}_{k}})[\bar{L}^{\prime\prime}_{\mathcal{A}_{k}\mathcal{A}_{k}}]^{-1}L_{N,\mathcal{A}_{k}}^{\prime}(\bar{\tensor{W}},\bar{\bm{\beta}}^{\mathcal{A}_{k}},\tensor{X}) \\
& + [D_{N,ij,\mathcal{A}_{k}}(\bar{\tensor{W}},\bar{\bm{\beta}}^{\mathcal{A}_{k}})-\bar{L}^{\prime\prime}_{ij,\mathcal{A}_{k}}(\bar{\bm{\beta}}^{\mathcal{A}_{k}})[\bar{L}^{\prime\prime}_{\mathcal{A}_{k}\mathcal{A}_{k}}]^{-1}D_{N,\mathcal{A}_{k}\mathcal{A}_{k}}(\bar{\tensor{W}},\bar{\bm{\beta}}^{\mathcal{A}_{k}})] v_{N,k} \\
& + L_{N,ij}^{\prime}(\bar{\tensor{W}},\bar{\bm{\beta}}^{\mathcal{A}_{k}},\tensor{X}).
\end{aligned}
\end{equation} By the incoherence condition outlined in condition (A3), for any $(i,j) \in \mathcal{A}_{k}$,
\begin{equation*}
    |\bar{L}_{ij,\mathcal{A}_{k}}^{''}(\bar{\tensor{W}},\bar{\bm{\beta}})[\bar{L}_{\mathcal{A}_{k},\mathcal{A}_{k}}^{''}(\bar{\tensor{W}},\bar{\bm{\beta}})]^{-1} \text{sign}(\bar{\bm{\beta}}_{\mathcal{A}_{k}})| \leq \delta < 1.
\end{equation*}Thus, following straightforwardly (with the modification that we are considering each $\mathcal{A}_{k}$ instead of $\mathcal{A}$) from the proofs of Theorem 2 of \citet{peng2009partial}, the remaining terms in \eqref{eqn:thm2_bouding} can be shown to be all $o(\lambda_{N,k})$, and the event $\max_{(i,j) \in \mathcal{A}_{k}^c}|L_{N,ij}^{\prime}(\bar{\tensor{W}},\hat{\bm{\beta}}^{\mathcal{A}_{k}},\tensor{X})| < \lambda_{N,k}$ with probability at least $1-O(\exp(-\eta \log p))$ for sufficiently large $N$. Thus, it has been proved that for sufficiently large $N$, no wrong edge will be included for each true edge set $\mathcal{A}_{k}$ and hence, no wrong edge will be included in $\mathcal{A} = \cup_k \mathcal{A}_{k}$.
\end{proof}

\begin{proof}[proof of Theorem 3]
By Theorem 1 and Theorem 2, with probability tending to $1$, any solution of the restricted problem is also a solution of the original problem. On the other hand, by Theorem 2 and the KKT condition, with probability tending to $1$, any solution of the original problem is also a solution of the restricted problem. Therefore, Theorem 3 follows. 
\end{proof}

\section{Simulated Precision Matrix}
\label{sec:simulated_precision_matrix}
\begin{enumerate}
  \item \textbf{AR1($\rho$)}: The covariance matrix of the form $\mat{A} = (\rho^{|i-j|})_{ij}$ for $\rho \in (0,1)$.
  \item \textbf{Star-Block (SB):} A block-diagonal covariance matrix, where each block's precision matrix corresponds to a star-structured graph with $(\mat{\Psi}_k)_{ij} = 1$. Then, for $\rho \in (0,1)$, we have that $\mat{A}_{ij} = \rho$ if $(i,j) \in E$ and $\mat{A}_{ij}=\rho^2$ for $(i,j) \not \in E$, where $E$ is the corresponding edge set.
  \item \textbf{Erdos-Renyi random graph (ER):} The precision matrix is initialized at $\mat{A} = 0.25 \mat{I}$, and $d$ edges are randomly selected. For the selected edge $(i,j)$, we randomly choose $\psi \in [0.6, 0.8]$ and update $\mat{A}_{ij} = \mat{A}_{ji} \rightarrow \mat{A}_{ij} - \psi$ and $\mat{A}_{ii} \rightarrow \mat{A}_{ii} + \psi$, $\mat{A}_{jj} \rightarrow \mat{A}_{jj} + \psi$.
\end{enumerate}

\end{document}